\documentclass{article}

    \PassOptionsToPackage{numbers, compress}{natbib}

 \usepackage[preprint]{neurips_2026}

\usepackage[utf8]{inputenc} %
\usepackage[T1]{fontenc}    %
\usepackage{hyperref}
            
\usepackage{url}            %
\usepackage{booktabs}       %
\usepackage{amsfonts}       %
\usepackage{nicefrac}       %
\usepackage{microtype}      %

\usepackage[table, svgnames, dvipsnames]{xcolor}
\hypersetup{
    colorlinks=true,
    linkcolor=black,
    citecolor=[rgb]{0.0,0.2,0.6},
    urlcolor=[rgb]{0.0,0.2,0.6}
}
\usepackage{amsthm}
\usepackage{algorithm}
\usepackage{algpseudocode}
\usepackage{amssymb}
\usepackage{bbm}
\usepackage{amsmath}
\usepackage{mathtools}
\usepackage{xspace}
\usepackage{tikz}
\usetikzlibrary{arrows.meta, positioning, automata, matrix}
\newcommand{\fcell}[2]{\fill[black!85] (#1, #2) rectangle ++(0.55, 0.55);}
\usepackage{subcaption}
\usepackage{enumitem}
\usepackage{tabularx}
\usepackage{makecell}
\usepackage{longtable}
\usepackage{wrapfig}
\definecolor{spanred}{RGB}{200,30,30}
\usepackage{array, booktabs, multirow}
\newcolumntype{L}{>{\raggedright\arraybackslash}X}
\usepackage{colortbl}
\definecolor{vrulecolor}{gray}{0.80}

\usepackage{amsmath,amsfonts,bm}

\def\eqref#1{equation~\ref{#1}}

\def\1{\bm{1}}

\DeclareMathAlphabet{\mathsfit}{\encodingdefault}{\sfdefault}{m}{sl}
\SetMathAlphabet{\mathsfit}{bold}{\encodingdefault}{\sfdefault}{bx}{n}

\newcommand\given{\mid}
\newcommand{\mask}{\ensuremath{\texttt{[MASK]}}\xspace}

\newcommand{\ind}[1]{\mathbbm{1}[#1]}

\definecolor{cleft}{RGB}{160, 40, 40}     %
\definecolor{cmid}{RGB}{30, 100, 50}      %
\definecolor{cright}{RGB}{30, 60, 140}    %

\newcommand{\zl}{\ensuremath{\textcolor{cleft}{z_\ell}}}
\newcommand{\zm}{\ensuremath{\textcolor{cmid}{z_m}}}

\newcommand{\zr}{\ensuremath{\textcolor{cright}{z_r}}}

\newcommand{\xlr}{\ensuremath{x_{\ell:r-1}}}
\newcommand{\xlm}{\ensuremath{x_{\ell:m-1}}}
\newcommand{\xmr}{\ensuremath{x_{m:r-1}}}

\newcommand\pfa{p_\mathcal{M}}
\newcommand\dlm{dLLM\xspace}
\newcommand\dlms{dLLMs\xspace}
\newcommand\llada{LLaDA\xspace}
\newcommand\dream{Dream\xspace}

\newcommand{\src}{\mathrm{src}}
\newcommand{\dst}{\mathrm{dst}}
\newcommand{\lbl}{\mathrm{label}}

\theoremstyle{plain}
\newtheorem{theorem}{Theorem}[section]
\newtheorem{proposition}[theorem]{Proposition}

\theoremstyle{definition}
\newtheorem{definition}[theorem]{Definition}

\theoremstyle{remark}

\title{Constrained Decoding for Diffusion Language Models via Efficient Inference over Finite Automata}

\author{%
  Meihua Dang \\
  Stanford University \\
  \texttt{mhdang@cs.stanford.edu}
  \And
  Stefano Ermon \\
  Stanford University \\
  \texttt{ermon@cs.stanford.edu} \\
}

\begin{document}

\maketitle

\begin{abstract}

Constrained decoding is essential for serving LLMs, ensuring that generated outputs follow specific structures such as JSON schema-formatted function calls. Existing systems are designed for autoregressive models and assume left-to-right generation, masking out invalid next tokens at each step. 
Diffusion language models, however, break this assumption: they sample multiple positions simultaneously from a fully-factorized mean-field distribution at each denoising step. 
In this paper, we present an exact and tractable algorithm for sampling from the constrained mean-field posterior under any constraint expressible as a finite automaton. Viewing finite automata as graphical models, we obtain tractable representations of the constrained distribution that enable efficient inference.
The approach guarantees constraint satisfaction by construction, supports both greedy and sampling-based decoding, and is compatible with parallel and block-wise decoding under arbitrary remasking schedules.
Applying depth-reduction techniques from arithmetic circuit theory, we further reduce sampling depth from linear to logarithmic in the sequence length.
Empirical evaluations on Dream-7B and LLaDA-8B show substantial accuracy gains across various tasks including function calling (xLAM, BFCL), planning (Sudoku, Countdown), text-to-SQL (Spider), and math reasoning (GSM-Symbolic), with little inference overhead relative to unconstrained decoding. For example, on BFCL-Live, our approach improves Dream-7B's greedy decoding accuracy from 63.9\% to 71.5\%, and stochastic sampling accuracy from 22.3\% to 69.0\%, where the unconstrained baseline collapses, with under 5\% wall-clock overhead.

\end{abstract}
\section{Introduction}
\label{sec:introduction}

A wide range of LLM applications require outputs to strictly adhere to structural constraints. Examples include generating function calls conforming to a given JSON schema~\cite{patil2024gorrilla,patil2025the} and producing syntactically correct SQL queries~\cite{lei2025spider}. Autoregressive LLMs enforce such constraints at inference time via \emph{constrained decoding}, where invalid next tokens violating the constraint are masked out at each generation step~\citep{willard2023efficient}. This approach is now a standard component of LLM inference frameworks such as XGrammar~\citep{dong2025xgrammar}, Guidance~\cite{guidance_repo} and SGLang~\cite{zheng2024sglang}.

Recent advances in discrete diffusion language models (\dlms{}) have established them as competitive alternatives to autoregressive LLMs~\citep{austin2021d3pm,lou2023discrete,sahoo2024simple}. Open-weight models such as \llada-8B~\citep{nie2025large} and \dream-7B~\citep{ye2025dream} have demonstrated performance comparable to similarly sized autoregressive models on code and math reasoning. Closed-source models including Mercury~\cite{inception25mercury}, Gemini Diffusion~\cite{geminidiffusion2025}, and Seed Diffusion~\cite{song2025seed} further demonstrate excellent inference efficiency. 

Existing constrained generation frameworks~\citep{dong2025xgrammar,guidance_repo,zheng2024sglang} are specifically designed for left-to-right autoregressive generation and adapting them to \dlms poses unique challenges. Unlike autoregressive LLMs, \dlms sample multiple tokens simultaneously from a fully-factorized mean-field distribution at each step. Because constraints operate globally, masking tokens independently at each position is not enough: locally valid tokens can still combine into invalid sequences.

To bridge this gap, we introduce an exact and tractable inference algorithm that, at each diffusion step, samples from the \emph{constrained mean-field posterior}: $p_\theta(x^0\!\mid\!x^t, \mathcal{C})\!\propto\!p_\theta(x^0\!\mid\!x^t) \cdot \ind{x^0\!\in\!\mathcal{C}}$ where the constraint $\mathcal{C}$ is expressed as a finite automaton, $p_\theta(x^0 \mid x^t)$ is the model’s fully factorized mean-field prediction, and the indicator enforces the constraint globally on the complete sequence~$x^0$.
Viewing the automaton as a graphical model encoding constraint $\mathcal{C}$ as its support, we are able to obtain a tractable representation for the constrained mean-field prediction $p_\theta(x^0\!\mid\!x^t, \mathcal{C})$, which enables efficient sampling. 
Our approach guarantees constraint satisfaction by construction, supports both greedy decoding and stochastic sampling, and applies to parallel and block-wise decoding under arbitrary remasking schedules. It also computes per-token marginals under the constrained distribution, which we use as a confidence signal for remasking.

Naive sampling from the chain-structured graphical model induced by a finite automaton requires ${\mathcal{O}}(L)$ \emph{sequential} forward/backward steps for a sequence of length $L$. Inspired by depth-reduction techniques for arithmetic circuits~\citep{valiant1981fast}, we restructure this linear computation graph into a binary tree of prefix–suffix state pairs, reducing inference steps from ${\mathcal{O}}(L)$ to ${\mathcal{O}}(\log L)$ while preserving the same sampling distribution. This yields substantially more parallelism on modern GPUs.

Empirically, we evaluate our method on \llada-8B and \dream-7B comparing against the unconstrained base models. We consider diverse constraints and tasks, including JSON-schema constraints for function calling (xLAM, BFCL), format constraints for planning (Sudoku, Countdown), SQL syntax and schema constraints (Spider), and symbolic expression constraints for math reasoning (GSM-Symbolic). Our method guarantees constraint satisfaction and substantially improves accuracy. For example, on BFCL-Live, our approach improves \dream's greedy decoding accuracy from 63.9\% to 71.5\%, and remains robust under stochastic sampling: while the unconstrained \dream collapses to 22.3\%, our method achieves 69.0\%, with under 5\% wall-clock overhead.

\section{Related Work}
\paragraph{Constrained decoding for autoregressive LLMs.}
There is an extensive line of work on locally constrained decoding~\cite{shin2021constrained, scholak2021picard, poesia2022synchromesh, ugare2025syncode, park2025flexible}, with a primary focus on enforcing hard constraints efficiently during autoregressive generation. These approaches typically operate by masking out invalid next tokens based on the current prefix, thereby ensuring local constraint satisfaction at each decoding step. Existing frameworks leverage different constraint representations and develop different parsing algorithms correspondingly: for example, Outlines~\cite{willard2023efficient} uses regular expressions and their corresponding finite automata representations, while more recent systems such as XGrammar~\citep{dong2025xgrammar}  and Guidance~\cite{guidance_repo} rely on context-free grammars. 

\paragraph{Constrained decoding for \dlms.}
DINGO~\citep{suresh2026dingo} is one pioneering work exploring inference algorithms over finite automata to support constrained decoding for \dlms. In particular, DINGO leverages a dynamic programming algorithm to perform MAP inference on deterministic finite automata with their inputs weighted by the mean-field predictions from \dlms, supporting block-wise greedy decoding from \dlms. Our approach can be viewed as a strict generalization of DINGO, supporting greedy decoding and sampling under arbitrary masking strategies constrained by (possibly nondeterministic) finite automata~(Section~\ref{sec:method}).
\citet{mundler2026constrained} and \citet{zhang2026lookahead} target richer constraints expressed as context-free grammars~(CFGs). Both adopt a per-step rejection-sampling scheme that draws candidate tokens and then verifies whether the intermediate output remains acceptable; \citet{zhang2026lookahead} additionally introduce a lookahead step that uses the mean-field distributions to reliably verify proposed tokens. These methods do not guarantee constraint satisfaction \emph{within a finite sampling budget}, as acceptance rate can be extremely low in practice. 

\section{Background}
\label{sec:background}

\subsection{Diffusion Language Models}
\label{sec:dllm}
Masked \dlms~\cite{austin2021d3pm,sahoo2024simple,shi2024simplified} learn to reverse a forward corruption process $q$ that stochastically converts a clean token sequence $x^0 = (x^0_1, \ldots, x^0_L)$ of length $L$ into a noisy sequence $x^t$ over time $t \in [0, 1]$, by masking each token with probability $t$:
\begin{equation}
    q_{t \given 0}(x^t_i \given x^0_i) =
    \begin{cases}
        t,     & x_i^t = \mask \\
        1 - t, & x_i^t = x_i^0
    \end{cases}
    \qquad
    q_{t \given 0}(x^t \given x^0) = \prod_{i} q_{t \given 0}(x^t_i \given x^0_i).
    \label{eq:dlm-forward}
\end{equation}
\dlms parameterize the reverse process via a mean-field distribution $p_\theta(x^0 \!\mid\! x^t)$ over clean tokens given noisy input, trained by maximizing an evidence lower bound on the data log-likelihood~\cite{sahoo2024simple,ou2025your}. 

Sampling from \dlms begins with a fully masked sequence and proceeds by iteratively unmasking tokens. A denoising step from time $t$ to $s < t$ first draws a full clean sequence $\hat{x}^0$ from the mean-field prediction $p_\theta(x^0 \given x^t)$, then commits a subset $\mathcal{U} \subseteq \{i : x_i^t = \mask\}$ of newly predicted positions and re-masks the rest to obtain $x^s$:
\begin{equation}
     \hat{x}^0\sim p_\theta(x^0 \given x^t) = \prod_{i=1}^L p_\theta(x_i^0 \given x^t),
    \qquad
    x_i^{s} = \begin{cases} \hat{x}^0_i & i \in \mathcal{U} \\ x_i^t & \text{otherwise.} \end{cases}
    \label{eq:dlm-sampling}
\end{equation}
Iterating this denoising process from $x^T$ down to $x^0$ defines a marginal distribution $p_{\mathrm{\dlm}}(x^0)$ over clean sequences, which we refer to as the \dlm's \emph{generative distribution}.
While \dlms are trained on uniformly random masks, in practice decoding heuristics are used at inference time to choose which positions to unmask. For example, \dream selects positions with the lowest predictive entropy at each step~\cite{ye2025dream}, while \llada selects positions with the highest predictive probability~\cite{nie2025large}.

\subsection{Finite Automata}
\label{sec:back-nfa}

In this work, we consider constraints expressible as \emph{finite automata}~\citep{hopcroft2001introduction}. 

\begin{definition}
A (nondeterministic) finite automaton (FA) is a tuple $\mathcal{M} = (\mathcal{S}, \mathcal{V}, \delta, s_0, \mathcal{F})$ with states $\mathcal{S}$, alphabet $\mathcal{V}$ (often corresponding to a \dlm vocabulary), transition function $\delta : \mathcal{S} \times \mathcal{V} \to 2^{\mathcal{S}}$, initial state $s_0$, and accept states $\mathcal{F} \subseteq \mathcal{S}$. A string $v_{1:L}$ is \emph{accepted} if there exists a sequence of states $s_0, s_1, \dots, s_L$ with $s_i \in \delta(s_{i-1}, v_i)$ and $s_L \in \mathcal{F}$. A finite automaton is \emph{deterministic} if $|\delta(s, v)| = 1$ for all $s$ and $v$.
\end{definition}
A finite automaton encodes a constraint $\mathcal{C} \subseteq \mathcal{V}^L$ as the set of length-$L$ accepted strings. Equivalently, as illustrated in Figure~\ref{fig:fa_gm}, we can also view $\mathcal{M}$ as a directed graph with \emph{edge set}
\[
\mathcal{E}
:=
\{(s, s') \in \mathcal{S} \times \mathcal{S}
\mid \exists v \text{ s.t. } s'\!\in \!\delta(s, v)\}.
\]
For each edge $e\!=\!(s, s')$, we write $\src(e)\!:=\!s$, $\dst(e)\!:=\!s'$ and $\lbl(e) := \{v \mid s'\!\in\!\delta(s, v)\}$. Under this view, a string $v_{1:L}$ is accepted iff there exists a path $e_{1:L}$ such that $\src(e_1)\!=\!s_0$, $\dst(e_i)\!=\!\src(e_{i+1})$ for all $i<L$, $\dst(e_L)\in\mathcal{F}$, and $v_i \in \lbl(e_i)$ for all $i$.

\paragraph{Finite automata as graphical models.}

\begin{figure*}[t]
    \centering
    \resizebox{\textwidth}{!}{\providecommand{\fcell}[2]{\fill[black!85] (#1, #2) rectangle ++(0.55, 0.55);}

\begin{tikzpicture}[
    every node/.style={font=\normalsize, inner sep=2.5pt},
    arr/.style={-{Stealth[length=2.5mm]}, thick},
    state/.style={circle, draw, thick, minimum size=8mm, inner sep=0pt},
    accept/.style={state, double, double distance=1.5pt},
    cellgrid/.style={step=0.55, very thin, gray!60},
]

\begin{scope}[xshift=0cm, yshift=0cm]
\node[state]  (s0) at (0,   1.4) {$s_0$};
\node[state]  (s1) at (2.0, 1.4) {$s_1$};
\node[accept] (s2) at (4.0, 1.4) {$s_2$};
\draw[arr] (s0) -- node[above] {$a\;{\scriptstyle(e_0)}$} (s1);
\draw[arr] (s1) edge[loop above, looseness=6] node {$b,c\;{\scriptstyle(e_1)}$} (s1);
\draw[arr] (s1) -- node[above] {$d\;{\scriptstyle(e_2)}$} (s2);
\draw[arr] (s2) edge[loop above, looseness=6] node {$e\;{\scriptstyle(e_3)}$} (s2);
\node at (2.0, -0.6) {(a) DFA $\mathcal{M}$};
\end{scope}

\begin{scope}[xshift=5.2cm, yshift=0cm]
\node          (z1)   at (0,   1.6) {$z_1$};
\node          (z2)   at (1.0, 1.6) {$z_2$};
\node          (zd)   at (2.0, 1.6) {$\cdots$};
\node          (zL)   at (3.0, 1.6) {$z_L$};
\node[draw, fill=black!12, thick, rounded corners=1pt, minimum size=6mm,
      inner sep=1pt] (zLp1) at (4, 1.6) {\scriptsize $z_{L\!+\!1}$};
\node (x1) at (0,   0.7) {$x_1$};
\node (x2) at (1.0, 0.7) {$x_2$};
\node (xL) at (3.0, 0.7) {$x_L$};
\draw[arr] (z1) -- (z2);
\draw[arr] (z2) -- (zd);
\draw[arr] (zd) -- (zL);
\draw[arr] (zL) -- (zLp1);
\draw[arr] (z1) -- (x1);
\draw[arr] (z2) -- (x2);
\draw[arr] (zL) -- (xL);
\node at (2.15, -0.6) {(b) Graphical Model};
\end{scope}

\begin{scope}[xshift=10.8cm, yshift=0cm]
\node at (0.275, 2.50) {$e_0$};
\node at (0.825, 2.50) {$e_1$};
\node at (1.375, 2.50) {$e_2$};
\node at (1.925, 2.50) {$e_3$};
\node at (-0.40, 1.925) {$e_0$};
\node at (-0.40, 1.375) {$e_1$};
\node at (-0.40, 0.825) {$e_2$};
\node at (-0.40, 0.275) {$e_3$};
\fcell{0.55}{1.65}    %
\fcell{1.10}{1.65}    %
\fcell{0.55}{1.10}    %
\fcell{1.10}{1.10}    %
\fcell{1.65}{0.55}    %
\fcell{1.65}{0.0}     %
\draw[cellgrid] (0,0) grid[step=0.55] (2.20, 2.20);
\draw[thick] (0,0) rectangle (2.20, 2.20);
\node at (1.10, -0.6) {(c) Transition };
\end{scope}

\begin{scope}[xshift=14.2cm, yshift=0cm]
\node at (0.275, 2.50) {$a$};
\node at (0.825, 2.50) {$b$};
\node at (1.375, 2.50) {$c$};
\node at (1.925, 2.50) {$d$};
\node at (2.475, 2.50) {$e$};
\node at (-0.40, 1.925) {$e_0$};
\node at (-0.40, 1.375) {$e_1$};
\node at (-0.40, 0.825) {$e_2$};
\node at (-0.40, 0.275) {$e_3$};
\fcell{0.0}{1.65}     %
\fcell{0.55}{1.10}    %
\fcell{1.10}{1.10}    %
\fcell{1.65}{0.55}    %
\fcell{2.20}{0.0}     %
\draw[cellgrid] (0,0) grid[step=0.55] (2.75, 2.20);
\draw[thick] (0,0) rectangle (2.75, 2.20);
\node at (1.375, -0.6) {(d) Emission};
\end{scope}

\end{tikzpicture}}
\caption{\textbf{(a)} A finite automaton $\mathcal{M}$ accepting $a(b|c)^* d e^*$. \textbf{(b)} The induced chain-structured graphical model over length-$L$ sequences, with edge-valued latents $z_{1:L}$ emitting observed tokens $x_{1:L}$. \textbf{(c)} The transition factor $p(z_t \!\mid \! z_{t-1})$, nonzero when consecutive edges chain end-to-end ($\dst(z_{t-1}) = \mathrm{src}(z_t)$). \textbf{(d)} The emission factor $p(x_t \!\mid \! z_t)$, nonzero when $x_t$ is in the label set of edge $z_t$.}
\label{fig:fa_gm}
\end{figure*}

Following~\citet{dang2026mitigating}, we define $\pfa$ as a chain-structured graphical model over sequences of $L$ tokens $\pfa(x_{1:L})$ such that $\pfa(x_{1:L})\!>\!0$ if and only if $x_{1:L}\!\in\!\mathcal{C}$, that is, the constraint $\mathcal{C}$ is encoded as its support. $\pfa$ can also be viewed as a hidden Markov model~\citep{rabiner1986introduction}, with latent variables $z_i\!\in\!\mathcal{E}$ taking values from the edge set of the automaton. See Figure~\ref{fig:fa_gm} for a visualization. 

The latent sequence $z_{1:L}$ encodes a random path through the automaton: $z_t$ is the edge taken at step $i$ with token $x_i$ emitted from its label $\lbl(e)$. Consecutive edges must chain end-to-end, i.e. $\dst(z_i)\!=\!\src(z_{i+1})$, as required by the transition matrix, starting at the initial state $s_0$ and ending in an accepting state in $\mathcal{F}$. Abusing notation, we write conditional probabilities as\footnote{These are unnormalized factors rather than conditional probabilities; we treat them as such and implicitly normalize globally over length-$L$ sequences during sampling.}
\[
\left\{
\begin{aligned}
\pfa(z_1 = e)
&:= \ind{\src(e) = s_0}, \\[0.4em]
\pfa(z_i = e' \given z_{i-1} = e)
&:= \ind{\dst(e) = \src(e')}, \\[0.4em]
\pfa(x_i = v \given z_i = e)
&:= \ind{v \in \lbl(e)}.
\end{aligned}
\right.
\]
Marginalizing over the latent paths yields
\begin{equation}
\pfa(x_{1:L})
\propto \sum_{z_{1:L}}
\pfa(z_1)
\prod_{i=2}^{L} \pfa(z_i \given z_{i-1})
\prod_{i=1}^{L} \pfa(x_i \given z_i)
\;\ind{\dst(z_L) \in \mathcal{F}}.
\end{equation}
The value of $\pfa(x_{1:L})$ is proportional to the number of accepting paths that generate $x_{1:L}$; if the finite automaton is \emph{deterministic}, then each accepted sequence has exactly one accepting path, thus $\pfa$ becomes uniform over $\mathcal{C}$. The chain structure allows $\pfa$ and its marginals to be computed exactly via forward–backward message passing~\citep{pearl1982reverend}.

\section{Method}
\label{sec:method}

\subsection{Problem Statement}
\label{sec:method-problem}
Our goal is to sample from a \dlm's generative distribution $p_{\mathrm{\dlm}}$ restricted to a constraint $\mathcal{C}$, i.e., $p_{\mathrm{\dlm}}(x^0 \!\mid\! x^0 \in \mathcal{C})$. However, sampling exactly from this distribution is intractable, and one naive resort is rejection sampling against the full reverse process, which is infeasible in practice due to extremely low acceptance rates.
For autoregressive LLMs, a common practice is to mask out next tokens that would violate the constraint as generation proceeds left-to-right~\citep{willard2023efficient}. Following an analogous strategy, we enforce that samples drawn from \emph{each diffusion step} must satisfy the constraint. Recall from Equation~(\ref{eq:dlm-sampling}) that a standard diffusion step samples $\hat{x}^0$ from the model's mean-field distribution.
We instead sample from this distribution \emph{restricted to} $\mathcal{C}$, ensuring $\hat{x}^0 \in \mathcal{C}$ at every step, so the final output is guaranteed to satisfy the constraint by construction. Concretely, at each denoising step we draw
\begin{equation}
    \label{eq:condition}
    \hat{x}^0 \;\sim\; p_\theta(x^0 \given x^t,\, \mathcal{C})
    \;\propto\;
    \left(\prod_{i=1}^{L} p_\theta(x_i^0 \given x^t)\right) \cdot \ind{x^0 \in \mathcal{C}}.
\end{equation}
\paragraph{Why this is challenging.} At each diffusion step, the mean-field prediction $\prod_i p_\theta(x_i^0 \given x^t)$ is fully factorized, i.e. each position is sampled independently, while the constraint $\mathcal{C}$ couples positions globally. For example, if $\mathcal{C}$ only accepts valid real numbers, both ``\texttt{1.}'' and ``\texttt{.1}'' are valid length-2 sequences, but a factorized sampler that draws ``\texttt{.}'' at position 1 (consistent with ``\texttt{.1}'') and ``\texttt{.}'' at position 2 (consistent with ``\texttt{1.}'') produces the invalid sequence ``\texttt{..}''. Therefore, the autoregressive remedy of filtering out invalid next tokens does not apply: the validity of position $i$ depends on the tokens being sampled simultaneously at all other positions. Sampling exactly from Equation~(\ref{eq:condition}) therefore requires reasoning 
about the joint distribution induced by the constraint.

\subsection{Tractable Inference over Finite Automata Constraints}
\label{sec:method-chain}

Inference over general constraints is intractable, and for this work, we focus on constraints expressible by finite automata. As reviewed in Section~\ref{sec:back-nfa}, an automaton induces a chain-structured graphical model (Figure~\ref{fig:fa_gm}) and hence a distribution $\pfa(x^0_{1:L})$ over length-$L$ sequences whose support is exactly the constrained language, i.e.,
$\pfa(x^0_{1:L}) > 0$ if and only if $x^0_{1:L}\in \mathcal{C}$.

As described in Section~\ref{sec:back-nfa}, if the finite automaton is deterministic then $\pfa$ is uniform over $\mathcal{C}$. Sampling from the constrained posterior in Equation~(\ref{eq:condition}) therefore reduces to sampling from the product of mean-field and constrained distribution:
\begin{equation}
    \label{eq:product}
    \hat{x}^0 \;\sim\;\left(\prod_{i=1}^{L} p_\theta(x_i^0 \given x^t)\right) \cdot \pfa(x^0_{1:L})
    \;\propto\;
    \text{Equation~(\ref{eq:condition})}
\end{equation}
which is exactly the desired constrained mean-field distribution. 
For non-deterministic finite automata~(NFAs), the product construction above remains tractable, but $\pfa(x^0_{1:L})$ now weights each accepted sequence by the number of accepting paths that generate it; sampling from this weighted distribution becomes a soft proxy for $p_\theta(x^0 \given x^t,\, \mathcal{C})$.

\paragraph{Tractable product construction.}
Given two distributions $p_\theta$ and $\pfa$, our goal is to sample from
$
g(x^0_{1:L}) \propto p_\theta(x^0_{1:L})\cdot \pfa(x^0_{1:L}).
$
The product of two arbitrary distributions is in general intractable.
In our setting, however, $p_\theta$ is fully factorized (mean-field) and
$\pfa$ is chain-structured, thus their product admits a tractable representation with the same chain structure as $\pfa$ in Figure~\ref{fig:fa_gm}~\citep{shen2016tractable, VergariNeurIPS21}.

Concretely, the product is obtained by reweighting each individual emission factor of
$\pfa$ with the corresponding mean-field term of $p_\theta(x^0 \given x^t)\!=\!\prod_{i=1}^{L} p_\theta(x_i^0 \given x^t)$, while leaving
the transition factors unchanged. This yields a new chain-structured model
(Figure~\ref{fig:fa_gm}(b)) with factors
\begin{equation*}
\left\{
\begin{aligned}
g (z_1 = e)
&:= \pfa(z_1 = e ),\\
g (z_i = e' \mid z_{i-1} = e)
&:= \pfa(z_i = e' \mid z_{i-1} = e),\\
g(x_i = v \mid z_i = e)
&:= \pfa(x_i = v \mid z_i = e)\, \cdot p_\theta(x_i = v\given x^t) \\
\end{aligned}
\right.
\end{equation*}
Note that these reweighted factors are unnormalized and hence do not define conditional probabilities on their own; locally normalizing each factor would yield a distribution that is different from $g(x^0_{1:L})$. Instead, we normalize $g$ globally over length-$L$ sequences to preserve the original distribution. 
As in standard chain-structured graphical models~\cite{rabiner1986introduction}, to sample from $g$, we first perform a backward pass to compute the required normalization constants for each factor~\cite{pearl1982reverend} and then draw samples via ancestral sampling.

\paragraph{Time complexity.} 
The backward message-passing costs $\mathcal{O}(L(|\mathcal{E}|^2 + |\mathcal{E}||\mathcal{V}|))$ FLOPs: the $\mathcal{O}(L|\mathcal{E}|^2)$ term comes from the message propagation from $z_t$ to $z_{t-1}$ and the $\mathcal{O}(L|\mathcal{E}||\mathcal{V}|)$ term arises from emission reweighting using LM probabilities. 
The forward pass adds $\mathcal{O}(L(|\mathcal{E}| + |\mathcal{V}|))$ FLOPs of work, sampling $z_i$ and $x_i$ for $1\!\leq\!i\!\leq\!L$. In practice, we use the state-space formulation, reducing the total number of FLOPs to $\mathcal{O}(L(|\mathcal{S}|^2 + |\mathcal{E}||\mathcal{V}|))$ by performing message passing over the states of the finite automaton instead of the edges. Details in Appendix~\ref{app:forward-backward}.

\paragraph{Remasking confidence and diffusion sampling.}
\dlms select which tokens to commit at each step using a per-position confidence score, typically the unconstrained marginal $p_\theta(x_i^0 \mid x^t)$. Under a constraint $\mathcal{C}$, the appropriate score is the constrained marginal $p_\theta(x_i^0 = \hat{x}_i \mid x^t, \mathcal{C})$, which reflects how strongly the constraint and the model jointly prefer $\hat{x}_i$ at position $i$. 

\begin{figure}[t]
    \centering
    \begin{minipage}[c]{0.42\linewidth}
        \centering
        \begin{subfigure}[b]{\linewidth}
            \centering
            \begin{tikzpicture}[
    every node/.style={font=\normalsize, inner sep=2.5pt},
    arr/.style={-{Stealth[length=2.5mm]}, thick},
    sep/.style={spanred, font=\normalsize},
]

\def\dx{1.2}
\def\yEmit{0}
\def\yChain{1.2}

\node       (zd1) at (0*\dx, \yChain) {$\cdots$};
\node[sep]  (zi)  at (1*\dx, \yChain) {$z_\ell$};
\node       (zd2) at (2*\dx, \yChain) {$\cdots$};
\node       (zj)  at (3*\dx, \yChain) {$z_{r-1}$};
\node[sep]  (zjplus) at (4*\dx, \yChain) {$z_{r}$};
\node       (zd3) at (5.1*\dx, \yChain) {$\cdots$};

\node  (xi) at (1*\dx, \yEmit) {$x_\ell$};
\node  (xj) at (3*\dx, \yEmit) {$x_{r-1}$};

\draw[arr] (zd1) -- (zi);
\draw[arr] (zi)  -- (zd2);
\draw[arr] (zd2) -- (zj);
\draw[arr] (zj)  -- (zjplus);
\draw[arr] (zjplus)  -- (zd3);

\draw[arr] (zi) -- (xi);
\draw[arr] (zj) -- (xj);

\end{tikzpicture}     %
            \caption{Conditional independence.}
            \label{fig:hmm-cond-indep}
        \end{subfigure}\\[1em]
        \begin{subfigure}[b]{\linewidth}
            \centering
            \begin{tikzpicture}[
    every node/.style={font=\normalsize, inner sep=2.5pt},
    arr/.style={-{Stealth[length=2.5mm]}, thick},
]

\node          (z1)   at (0,   1.6) {$z_1$};
\node          (z2)   at (1.0, 1.6) {$z_2$};
\node          (zd)   at (2.0, 1.6) {$\cdots$};
\node          (zL)   at (3.0, 1.6) {$z_8$};
\node[draw, fill=black!12, thick, rounded corners=1pt, minimum size=6mm,
      inner sep=1pt] (zLp1) at (4, 1.6) {\scriptsize $z_{9}$};

\node (x1) at (0,   0.7) {$x_1$};
\node (x2) at (1.0, 0.7) {$x_2$};
\node (xL) at (3.0, 0.7) {$x_8$};

\draw[arr] (z1) -- (z2);
\draw[arr] (z2) -- (zd);
\draw[arr] (zd) -- (zL);
\draw[arr] (zL) -- (zLp1);

\draw[arr] (z1) -- (x1);
\draw[arr] (z2) -- (x2);
\draw[arr] (zL) -- (xL);

\end{tikzpicture}             %
            \caption{Chain-structured ($L=8$).}
            \label{fig:hmm-l8}
        \end{subfigure}
    \end{minipage}%
    \hfill
    \begin{subfigure}[c]{0.55\linewidth}
        \centering
        \begin{tikzpicture}[
    every node/.style={font=\normalsize, inner sep=2.5pt},
    line/.style={thick},
    arr/.style={-{Stealth[length=2.5mm]}, thick},
    internal/.style={spanred, font=\normalsize\itshape, inner sep=2pt},
]

\node[internal] (zroot) at (3.5, 5.0) {$z_{1:9}$};

\node[internal] (z15) at (1.5, 3.8) {$z_{1:5}$};
\node[internal] (z59) at (5.5, 3.8) {$z_{5:9}$};

\node[internal] (z13) at (0.5, 2.6) {$z_{1:3}$};
\node[internal] (z35) at (2.5, 2.6) {$z_{3:5}$};
\node[internal] (z57) at (4.5, 2.6) {$z_{5:7}$};
\node[internal] (z79) at (6.5, 2.6) {$z_{7:9}$};

\node[internal] (z12) at (0, 1.4) {$z_{1:2}$};
\node[internal] (z23) at (1, 1.4) {$z_{2:3}$};
\node[internal] (z34) at (2, 1.4) {$z_{3:4}$};
\node[internal] (z45) at (3, 1.4) {$z_{4:5}$};
\node[internal] (z56) at (4, 1.4) {$z_{5:6}$};
\node[internal] (z67) at (5, 1.4) {$z_{6:7}$};
\node[internal] (z78) at (6, 1.4) {$z_{7:8}$};
\node[internal] (z89) at (7, 1.4) {$z_{8:9}$};

\node (x1) at (0, 0.4) {$x_1$};
\node (x2) at (1, 0.4) {$x_2$};
\node (x3) at (2, 0.4) {$x_3$};
\node (x4) at (3, 0.4) {$x_4$};
\node (x5) at (4, 0.4) {$x_5$};
\node (x6) at (5, 0.4) {$x_6$};
\node (x7) at (6, 0.4) {$x_7$};
\node (x8) at (7, 0.4) {$x_8$};

\draw[arr] (zroot) -- (z15);
\draw[arr] (zroot) -- (z59);

\draw[arr] (z15) -- (z13);
\draw[arr] (z15) -- (z35);
\draw[arr] (z59) -- (z57);
\draw[arr] (z59) -- (z79);

\draw[arr] (z13) -- (z12);
\draw[arr] (z13) -- (z23);
\draw[arr] (z35) -- (z34);
\draw[arr] (z35) -- (z45);
\draw[arr] (z57) -- (z56);
\draw[arr] (z57) -- (z67);
\draw[arr] (z79) -- (z78);
\draw[arr] (z79) -- (z89);

\draw[arr] (z12) -- (x1);
\draw[arr] (z23) -- (x2);
\draw[arr] (z34) -- (x3);
\draw[arr] (z45) -- (x4);
\draw[arr] (z56) -- (x5);
\draw[arr] (z67) -- (x6);
\draw[arr] (z78) -- (x7);
\draw[arr] (z89) -- (x8);

\end{tikzpicture}
        \caption{Tree-structured ($L=8$)}
        \label{fig:segtree-hmm}
    \end{subfigure}
    
    \caption{Chain-structured graphical model and its tree-structured equivalent. \textbf{(a)} Given boundary states $z_\ell$ and $z_r$, $x_{\ell:r-1}$ is independent of the rest, the property exploited by the tree decomposition. \textbf{(b)} Naive sequential chain. \textbf{(c)} Tree-structured equivalent enabling $\mathcal{O}(\log L)$ parallel sampling.}
    \label{fig:hmm-vs-segtree}
\end{figure}

In diffusion models, this sampling procedure must be repeated at every denoising step; as a result, the $\mathcal{O}(L)$ sequential depth compounds across steps and becomes a major bottleneck. To address this, we introduce a log-depth sampling algorithm that reduces the sequential dependence to $\mathcal{O}(\log L)$. 

\subsection{Efficient Log-depth Tree Sampling}
\label{sec:method-tree}

Standard ancestral sampling, as well as forward/backward message passing, is sequential for the chain-structured graphical model in Figure~\ref{fig:hmm-l8}: each $z_i$ depends on $z_{i-1}$, so we must proceed through all $L$ states iteratively, even though the computation at each step is small (on the order of $10^6$ FLOPs). To expose more parallelism and fully leverage modern GPUs, we apply depth-reduction techniques for arithmetic circuits~\citep{valiant1981fast,zhang2025restructuring}, reducing the number of sequential steps from $\mathcal{O}(L)$ to $\mathcal{O}(\log L)$.

The key observation is that, for any states $l\!<\!m\!<\!r$, conditioning on the two boundary states $\zl$ and $\zr$ makes the tokens $\xlm$ and $\xmr$ conditionally independent given the midpoint state $\zm$~\citep{pearl1988probabilistic}:
\begin{align}
    p(\xlr\!\given\!\zl, \zr) \;\propto\; \sum_{\zm}
    \underbrace{p(\zm\!\given\!\zl, \zr)}_\text{midpoint}  \cdot\
     \underbrace{p(\xlm\!\given\!\zl, \zm)}_\text{left half}  \cdot\ \underbrace{p(\xmr\!\given\!\zm, \zr)}_{\text{right half}},\;\; \ell <m<r
     \label{eq:tree-recursion}
\end{align}
where we typically take $m=(\ell+r)/2$.
For example, in Figure~\ref{fig:segtree-hmm}, conditioning on $z_1$ and $z_9$ makes $x_{1:4}$ and $x_{5:8}$ conditionally independent given $z_5$.
This enables a divide-and-conquer sampling algorithm. Instead of sampling the latent states sequentially as $z_1, x_1, z_2, x_2, \ldots, z_L, x_L$, we first sample the midpoint latent state $z_{\lfloor L/2 \rfloor + 1}$ given the boundary states. The left and right subproblems are then conditionally independent and can therefore be sampled in parallel. Recursively applying this procedure yields a sampler of depth $\log L$, with all subproblems at each recursion level processed in parallel (Figure~\ref{fig:segtree-hmm}). The base case samples all tokens in parallel from the emission factors $p(x_i\given z_i,z_{i+1})\propto p(x_i\given z_i)\cdot p(z_{i+1}\given z_i)$.
The conditional coefficients $p(\zm \mid \zl, \zr)$ in Equation~(\ref{eq:tree-recursion}) are not given by the model directly, but can be computed by recursion as well from the relation:
\begin{equation}
    p(\zm \mid \zl, \zr) \;\propto\; p(\zm \mid \zl) \cdot p(\zr \mid \zm),
    \label{eq:midpoint-conditional}
\end{equation}
where the right-hand side are multi-step transition probabilities, and all of which can be computed in $\log L$ steps. The full algorithm consists of two phases: a \emph{bottom-up} pass that precomputes the multi-step transitions, and a \emph{top-down} divide-and-conquer pass that samples midpoint states level by level. Once all hidden states $z_{1:L}$ are sampled, the tokens $x_{1:L}$ are sampled independently in parallel from $p(x_i \mid z_i, z_{i+1})$. Pseudocode is given in Algorithm~\ref{alg:parallel-sample}; the proof is in Appendix~\ref{app-sec:parallel-sample}.

\begin{algorithm}[t]
\caption{Parallel constrained sampling from $p_\theta(x^0 \mid x^t, \mathcal{C})$}
\label{alg:parallel-sample}
\begin{algorithmic}[1]
\Require Diffusion predictions $p_\theta(x^0 \mid x^t)$, constraint $\mathcal{C}$, length $L$
\Ensure Sample $x^0 \sim p_\theta(x^0 \mid x^t, \mathcal{C})$
\For{$k = 0, 1, \ldots, \log_2 L$} \Comment{compute messages bottom-up}
    \State Compute $2^k$-step transitions $\{p(z_{i + 2^k} \mid z_i)\}_{i}$ from $2^{k-1}$-step transitions
\EndFor
\State Sample boundary states $(z_1, z_{L+1})$ from their joint marginal \Comment{sample top-down}
\State \Return $\textsc{Sample}(1, L+1, z_1, z_{L+1})$
\Statex
\Procedure{Sample}{$\ell, r, \zl, \zr$}
    \If{$r - \ell \leq 1$}
    \State \Return sample $x_\ell \sim p(x_\ell \mid \zl, \zr)$ \Comment{base case}
    \EndIf
    \State $m \gets \lfloor (\ell + r) / 2 \rfloor$
    \State Sample $\zm \sim p(\zm \mid \zl, \zr) \propto p(\zm \mid \zl) \cdot p(\zr \mid \zm)$
    \State \textbf{in parallel:} 
    \State \quad $x_{\ell:m-1} \gets \textsc{Sample}(\ell, m, \zl, \zm)$
    \State \quad $x_{m:r-1} \gets \textsc{Sample}(m, r, \zm, \zr)$
    \State \Return $\text{concat}(x_{\ell:m-1},\, x_{m:r-1})$
\EndProcedure
\end{algorithmic}
\end{algorithm}

\paragraph{Time complexity.} The bottom-up parallel pass computes all multi-step transitions via Equation~(\ref{eq:midpoint-conditional}), costing $\mathcal{O}(L(|\mathcal{E}|^3 + |\mathcal{E}||\mathcal{V}|))$ FLOPs in $\mathcal{O}(\log L)$ depth: the $\mathcal{O}(L|\mathcal{E}|^3)$ term comes from $|\mathcal{E}| \times |\mathcal{E}|$ matrix multiplications, while the $\mathcal{O}(L|\mathcal{E}||\mathcal{V}|)$ emission reweighting term is the same as in ancestral sampling. The top-down sampling pass samples the hidden states $z_{1:L+1}$ in $\mathcal{O}(L|\mathcal{E}|)$ work and $\mathcal{O}(\log L)$ depth via the recursion tree, after which all tokens $x_{1:L}$ are sampled fully in parallel at $\mathcal{O}(L|\mathcal{V}|)$ work. Similar to ancestral sampling, the total number of FLOPs can be reduced to $\mathcal{O}(L(|\mathcal{S}|^3 + |\mathcal{E}||\mathcal{V}|))$ in $\mathcal{O}(\log L)$ depth when performing message passing over states.

\begin{wrapfigure}{r}{0.33\linewidth}
  \vspace{-2em}
  \centering
  \includegraphics[width=\linewidth]{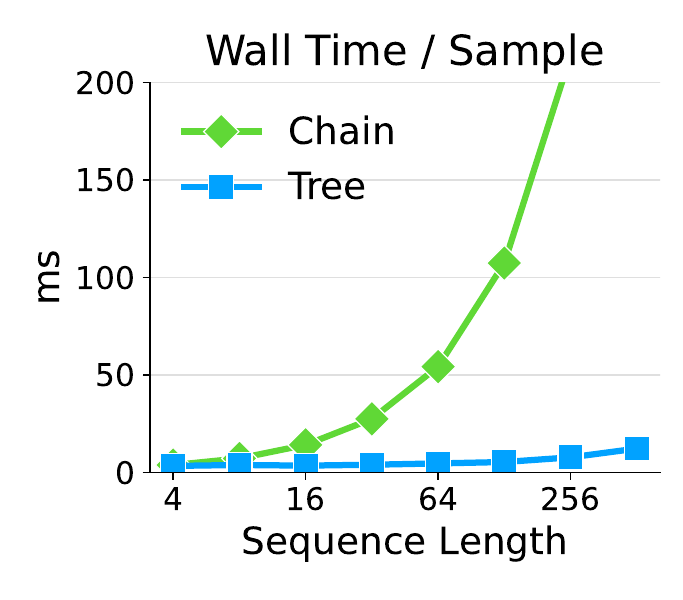}
  \caption{{Runtime comparison for chain (ancestral) and tree (log-depth) sampling on a DFA.}}
  \label{fig:tree-vs-chain}
  \vspace{-1em}
\end{wrapfigure}
\paragraph{Parallelization vs. FLOPs tradeoff.} Compared to ancestral sampling, log-depth sampling increases the message-passing FLOPs by a factor of $|\mathcal{S}|$, but reduces the depth of the computation graph from $\mathcal{O}(L)$ to $\mathcal{O}(\log L)$. 
This tradeoff is favorable on modern GPUs: the additional FLOPs are spent on larger matrix operations, while the number of sequential GPU calls is reduced from $\mathcal{O}(L)$ to $\mathcal{O}(\log L)$. 
Since each step of ancestral sampling performs relatively little work, its runtime is often dominated by synchronization and kernel-launch overhead, while the log-depth sampler exposes substantially more parallelism at each layer, leading to higher GPU utilization and lower wall-clock latency.

\section{Experiments}
\label{sec:exp}

\subsection{Tasks, Models and Setups}
\paragraph{Tasks} 
We evaluate our method on five tasks across six datasets. Constraints are encoded as finite automata (DFA or NFA); full prompts and example outputs are provided in Appendix~\ref{app-sec:task}.

\begin{itemize}[leftmargin=*]
\item \textbf{Function Calling (xLAM, BFCL).} 
\emph{Task:} Given a user query and a set of available function schemas, generate the corresponding function call.
\emph{Data:} xlam-function-calling-60k (xLAM)~\citep{liu2024apigen} and the single-turn split of the Berkeley Function Calling Leaderboard (BFCL)~\citep{patil2025the}, including both non-live and live settings. 
\emph{Metric:} Exact match against the ground-truth function call. 
\emph{Constraint:} A DFA enforcing that the output conforms to the given function schema. We evaluate two output formats: \textit{Python format} (the BFCL standard, e.g., {\verb|get_weather(location="NYC")|}) and \textit{JSON format} (e.g., \verb|{"name": "get_weather", "location": "NYC"}|).

\item \textbf{Sudoku.} 
\emph{Task:} Solve a 4$\times$4 Sudoku puzzle~\citep{ye2024diffusion}.
\emph{Metric:} Exact match against valid Sudoku solutions. Averaged across number of prefilled cells $\{4,\ldots,12\}$.
\emph{Constraint:} A DFA enforcing the 4$\times$4 output format and preserving prefilled input cells; row, column, and box constraints are not enforced by the DFA, which would require an exponentially larger DFA.

\item \textbf{Countdown.} 
\emph{Task:} Reach a target number using arithmetic operations on three given numbers~\citep{ye2024diffusion}.
\emph{Metric:} Whether the expression evaluates to the target.
\emph{Constraint:} A DFA enforcing $A \mathbin{\text{op}} B \!=\! C$ at each step, with operands from the input or prior results and the final result equal to the target.

\item \textbf{Math reasoning (GSM-Symbolic).} 
\emph{Task:} Solve grade-school math word problems.
\emph{Data:} GSM-Symbolic~\citep{mirzadeh2025gsmsymbolic}, where numerical values and names are replaced by symbolic variables.
\emph{Metric:} Exact match on the final answer.
\emph{Constraint:} A DFA enforcing that symbolic expressions appear only within \texttt{<<} and \texttt{>>}, while the remaining text is unrestricted natural language.

\item \textbf{Text-to-SQL (Spider).}
\emph{Task:} Given a user query and a database schema, generate the corresponding SQL query.
\emph{Data:} Spider~\citep{yu2018spider}.
\emph{Metric:} Execution accuracy. A query is correct if it returns the same results as the ground truth on the test database.
\emph{Constraint:} An NFA encodes SQL grammar restricted to the columns and tables in the provided schema.

\end{itemize}

\paragraph{Models} 
We evaluate two open-weight \dlms, \llada-8B~\citep{nie2025large} and \dream-7B~\citep{ye2025dream}, each with \emph{Base} and \emph{Instruct} variants. We use the \emph{Base} variants for Sudoku and Countdown and the \emph{Instruct} variants for the remaining tasks~\citep{ye2025dream}.
For position selection, \llada selects the highest-probability positions, while \dream selects the lowest entropy ones. Our method computes marginal probabilities under the constrained distribution at each step and uses them as the confidence signal for both models.

\subsection{Main Results}
\begin{table*}[t]
\centering
\caption{Comparing unconstrained ({Base}) and constrained ({Ours}) accuracy (\%) under greedy decoding and stochastic sampling on \dream-7B and \llada-8B. {CS} is the baseline constraint satisfaction rate; our method guarantees $100\%$ constraint satisfaction by construction. For function calling, we report both \emph{JSON} and \emph{Python} formats; for BFCL, L/NL denote Live and Non-Live settings.}
\label{tab:main-results}
\small
\setlength{\tabcolsep}{4pt}
\renewcommand{\arraystretch}{1.15}

\begin{tabular}{@{}l ccc |ccc| ccc| ccc@{}}
\toprule
& \multicolumn{6}{c}{\textbf{\dream-7B}} & \multicolumn{6}{c}{\textbf{\llada-8B}} \\
\cmidrule(lr){2-7} \cmidrule(lr){8-13}
& \multicolumn{3}{c}{\textbf{Greedy}} & \multicolumn{3}{c}{\textbf{Sampling}} 
& \multicolumn{3}{c}{\textbf{Greedy}} & \multicolumn{3}{c}{\textbf{Sampling}} \\
\textbf{Task / Method} & CS & Base & Ours & CS & Base & Ours 
              & CS & Base & Ours & CS & Base & Ours \\
\midrule
\rowcolor{gray!12}
\multicolumn{13}{@{}l}{\textit{Function Calling}} \\
xLAM \emph{(JSON)}   
& 90.1  & 74.6 & \textbf{76.6} & 59.2  & 50.5 & \textbf{75.7} 
& 90.2 & 67.8 & \textbf{69.0} & 86.4  & 64.2 & \textbf{68.4} \\

BFCL L \emph{(JSON)}
& 86.7  & 63.9 & \textbf{71.5} & 31.2 & 22.3 & \textbf{69.0} 
& 89.5 & 59.6 & \textbf{65.6} & 84.7 & 56.6 & \textbf{63.3} \\

BFCL NL \emph{(JSON)}
& 90.3 & 79.9 & \textbf{82.4} & 47.1 & 40.8 & \textbf{82.6} 
& 93.5 & 81.7 & \textbf{85.5} & 92.9 & 80.5 & \textbf{82.8} \\

xLAM \emph{(Python)}
& 70.3 & 56.1 & \textbf{76.4} & 25.6 & 20.9 & \textbf{76.8}
& 77.3 & 60.1 & \textbf{71.4} & 69.4 & 54.2 & \textbf{70.8} \\

BFCL L \emph{(Python)}
& 36.0 & 22.4 & \textbf{69.7} & 4.4 &  2.8 & \textbf{67.8} 
& 74.5 & 50.3 & \textbf{64.0} & 68.3 & 46.8 & \textbf{63.0} \\

BFCL NL \emph{(Python)}
& 44.4 & 36.3 & \textbf{78.3} & 8.4 &  7.2 & \textbf{76.2} 
& 83.8 & 71.4 & \textbf{83.8} &  76.1 & 62.3 & \textbf{81.4}	  \\

\rowcolor{gray!12}
\multicolumn{13}{@{}l}{\textit{Planning}} \\

Sudoku        
& 38.0  & 36.4 & \textbf{92.4}  & 7.6 & 4.7 & \textbf{91.9} 
& 20.7 & 19.8 & \textbf{79.6} & 19.8  & 18.7 & \textbf{77.9} \\

Countdown   
& 53.6  & 53.6 & \textbf{100.} & 35.2 & 35.2 & \textbf{100.}
& 53.8 & 53.8 & \textbf{100.} & 53.5  & 53.5 & \textbf{100.} \\

\rowcolor{gray!12}
\multicolumn{13}{@{}l}{\textit{Code \& Math}} \\

Spider        
& 72.2 & 52.7 & \textbf{53.0}  & 19.3  & 15.5 & \textbf{52.1}  
& 73.5 & 53.2 & \textbf{53.6}           
& 69.1 & 49.8 &  \textbf{53.3}         \\

GSM-Symbolic  
& 80.2 & 39.9 & \textbf{40.2}  & 54.6 & 15.0 & \textbf{38.0}  
& 81.5 & \textbf{49.6} & 49.2          
& 80.1 & 46.9 & \textbf{49.2} \\

\bottomrule
\end{tabular}
\end{table*}

Table~\ref{tab:main-results} reports baseline and constrained accuracy across five tasks under two decoding regimes: greedy decoding (temperature 0) and stochastic sampling (temperature 1), along with the baseline's constraint satisfaction rate. Our method guarantees $100\%$ constraint satisfaction by construction.

\textbf{Stochastic sampling collapses unconstrained \dlms; constrained decoding recovers near-greedy accuracy.} Without constraints, switching from greedy ($T{=}0$) to stochastic decoding ($T{=}1$) destroys structured-output performance: \dream's BFCL-Live (JSON) accuracy drops from 63.9\% to 22.3\%. The collapse is largely driven by constraint satisfaction: under sampling, far more outputs fail to satisfy. Our method improves greedy accuracy across tasks and, more interestingly, preserves accuracy under stochastic sampling where the unconstrained baseline collapses (69.0\%, vs.\ 22.3\% baseline), largely closing the gap to greedy performance.

\textbf{The format gap is structural, not semantic.} Switching from JSON to Python format sharply degrades baseline performance (\dream BFCL-Live: 63.9\% to 22.4\%; \llada: 59.6\% to 50.3\%), suggesting that these models know \emph{what function} to call but struggle with \emph{how} to format it. Our method largely closes this gap (\dream: 71.5\% to 69.7\%; \llada: 65.6\% to 64.0\%) by supplying the missing structure, enabling generalization across formats.

\begin{wraptable}{r}{0.4\linewidth}
\centering
\footnotesize
\vspace{-1.5em}
\caption{Ablation on remasking. {Mf} uses base model's mean-field prediction as confidence; {Mar} uses marginals under constrained distribution as confidence.}
\setlength{\tabcolsep}{2pt}
\begin{tabular}{l ccc | ccc}
\toprule
& \multicolumn{3}{c}{\textbf{\dream}} & \multicolumn{3}{c}{\textbf{\llada}}
\\
\textbf{Task} & Base & {Mf} & {Mar} & Base & {Mf} & {Mar}  \\
\midrule
xLAM \emph{(JS)}        & 74.6  & \textbf{77.5}  & 76.6           & 67.8    & \textbf{69.1}             & 69.0             \\
xLAM \emph{(Py)}      & 56.1  & 68.4           & \textbf{76.4}  & 60.1    & 58.7             & \textbf{71.4}             \\
BFCL-L \emph{(JS)}   & 63.9 &    69.4       & \textbf{71.5} & 59.6 & 64.9	 & \textbf{65.6}          \\
BFCL-L \emph{(Py)} & 22.4 &   61.5       & \textbf{69.7} & 50.3 & 61.5 & \textbf{64.0}          \\
\bottomrule
\end{tabular}

    \label{tab:ablation-conf}
\end{wraptable}

\textbf{Improvements come from both filtering invalid sequences and remasking under the constrained distribution.}
We compare two remasking strategies. ({Mf}) uses the baseline's mean-field prediction as confidence to select positions. (Mar) recomputes confidence as marginal probabilities under the constrained distribution, changing the decoding order itself.
Table~\ref{tab:ablation-conf} shows that Mar outperforms {Mf}, with the gap largest when the unconstrained baseline performs poorly. This pattern suggests ``filtering'' alone cannot find high-probability valid sequences when the model's mean-field is misaligned with the constraint; recomputing marginals provides the missing signal.

\subsection{Ablation and Analysis}

\textbf{Runtime overhead is small relative to accuracy gains.}
We measure per-sample latency, throughput, and accuracy on a 200-example subset of BFCL. As shown in Table~\ref{tab:runtime}, our tree variant improves accuracy by 12 points over the unconstrained baseline while incurring only 4\% wall-clock overhead. 
This shows that finite automaton inference is lightweight compared to \dlm and parallelizes well.

\textbf{Scaling with fewer generation steps.}
Figure~\ref{fig:bfcl-steps} evaluates \dream and \llada on the BFCL simple split as denoising steps decrease from 256 to 16. Baseline performance drops sharply with fewer steps, whereas our method remains much more robust. With more parallelism, unconstrained sampling produces more invalid tokens, whereas our method maintains valid structure and preserves accuracy.

\begin{table}[h]
\centering
\caption{{Runtime, accuracy, and throughput scaling on 200 BFCL examples (\dream)}. Latency uses batch size 1; throughput reports tokens/s as batch size increases. Measured on an A6000. Overhead is wall-clock relative to the unconstrained baseline. Mean $\pm$ std over 3 seeds.}
\label{tab:runtime}
\footnotesize
\setlength{\tabcolsep}{4pt}
\renewcommand{\arraystretch}{1.1}
\begin{tabular}{l c  cc  ccc}
\toprule
\multirow{2}{*}{\textbf{Method}} & \multirow{2}{*}{\textbf{Accuracy}} 
& \multicolumn{2}{c}{\textbf{Latency}} 
& \multicolumn{3}{c}{\textbf{Throughput (tok/s)}} \\
\cmidrule(lr){3-4} \cmidrule(lr){5-7}
& 
& \textbf{s/sample} 
& \textbf{Overhead} 
& \textbf{bs=1} 
& \textbf{bs=2} 
& \textbf{bs=4} \\
\midrule
Unconstrained                          
& 67.5 
& 24.26 $\pm$ 0.05 
& ---     
& 10.55 $\pm$ 0.02 
& 10.79 $\pm$ 0.06 
& 11.17 $\pm$ 0.05 \\

Ours (chain)                           
& 78.5 $\pm$ 0.2 
& 51.92 $\pm$ 0.14 
& +114\% 
& 4.93 $\pm$ 0.01 
& 6.88 $\pm$ 0.06 
& 8.62 $\pm$ 0.05 \\

Ours (tree)  
& 79.2 $\pm$ 0.5 
& 25.35 $\pm$ 0.04  
& +4\% 
& 10.10 $\pm$ 0.02 
& 10.21 $\pm$ 0.02 
& 10.44 $\pm$ 0.03 \\

\bottomrule
\end{tabular}
\end{table}
\begin{figure*}[!h]
  \centering
  \includegraphics[width=0.262\linewidth]{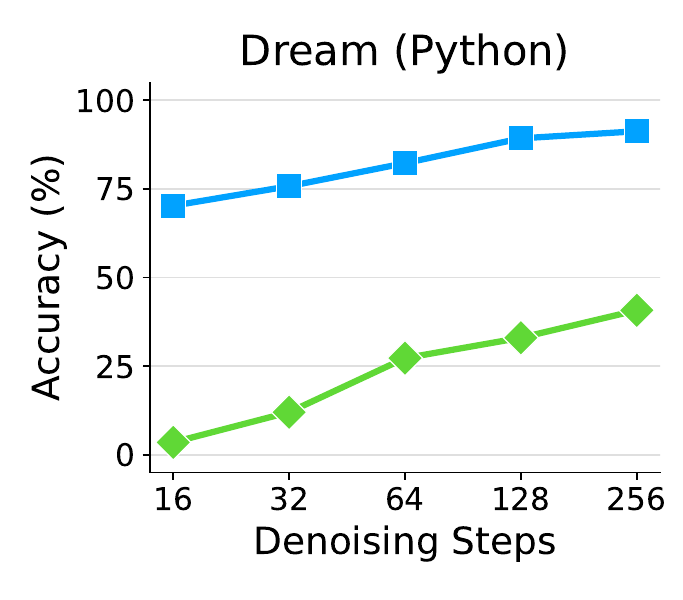}\hfill
  \includegraphics[width=0.236\linewidth]{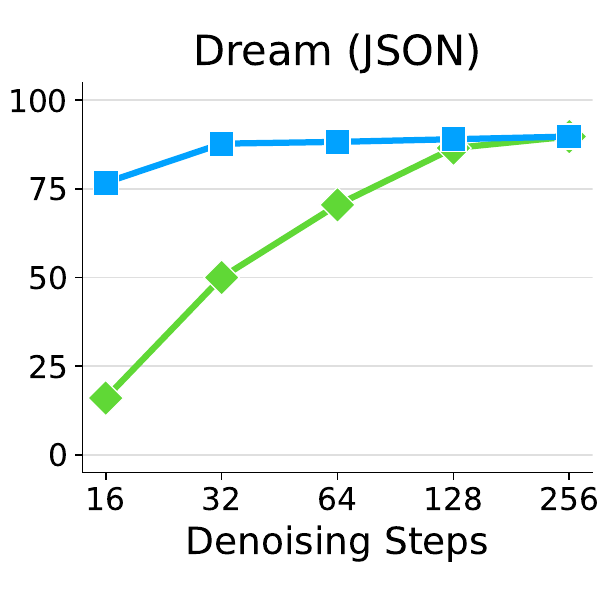}\hfill
  \includegraphics[width=0.236\linewidth]{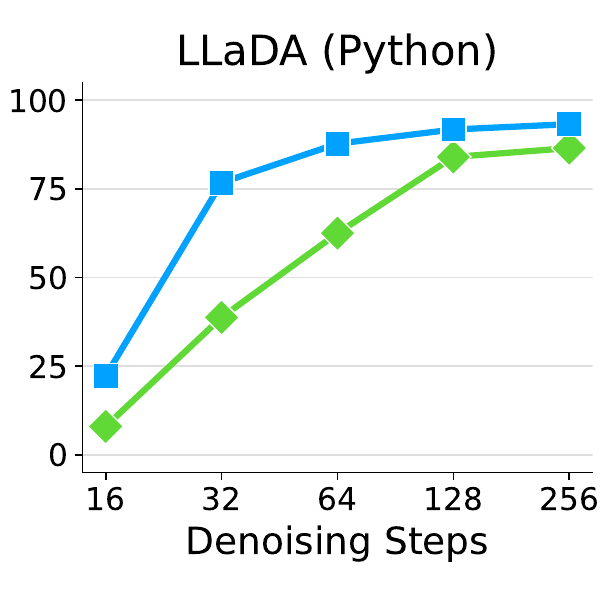}\hfill
  \includegraphics[width=0.236\linewidth]{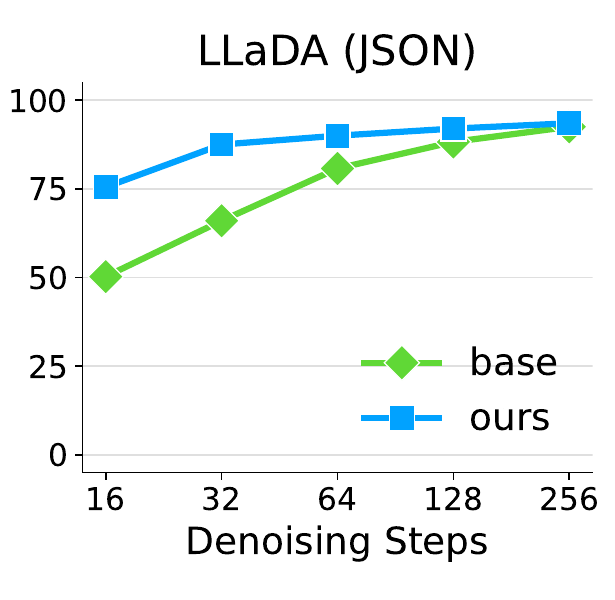}
  \caption{Accuracy vs.\ denoising steps on BFCL Simple split. Our method remains robust as the number of denoising steps decreases, whereas the unconstrained baseline deteriorates sharply.}
  
  \label{fig:bfcl-steps}
\end{figure*}
\begin{table}[!h]
\centering
\caption{BFCL accuracy (\%) comparing diffusion LLMs and autoregressive LLMs (Python format),
with autoregressive results from BFCL leaderboard~\citep{patil2025the}. Our method
substantially narrows the gap.}
\label{tab:bfcl-dlm-vs-ar}
\footnotesize
\setlength{\tabcolsep}{4pt}
\renewcommand{\arraystretch}{1.15}
\begin{tabular}{lcccc|cc}
\toprule
 & \multicolumn{4}{c}{\textbf{Diffusion LLMs (\dlms)}} & \multicolumn{2}{|c}{\textbf{Autoregressive LLMs}} \\
\cmidrule(lr){2-5} \cmidrule(lr){6-7}
\textbf{Setting} 
& \dream & \dream + Ours 
& \llada & \llada + Ours 
& Llama-3.1-8B & Qwen3-8B \\
\midrule

Non-live     & 36.3 & \textbf{78.3} & 71.4 & \textbf{83.8} & 84.0 & 88.6 \\
Live & 22.4 & \textbf{69.7} & 50.3 & \textbf{64.0} & 70.8 & 80.1 \\
\bottomrule
\end{tabular}
\end{table}

\textbf{Comparison with \dlms and autoregressive LLMs of similar size.}
Table~\ref{tab:bfcl-dlm-vs-ar} compares diffusion LLMs with similarly sized autoregressive models on BFCL. Unconstrained diffusion models perform substantially worse and our method substantially narrows the gaps.

\textbf{NFAs vs. DFAs.} The constrained decoding workload spans over three orders of magnitude, from a 21-state \emph{DFA} for Sudoku, to DFAs with ${\sim}$1k states for function calling, and \emph{NFAs} with ${\sim}$10k states and ${\sim}$100k edges for Spider. We use DFAs when their sizes are feasible and fall back to NFAs for Spider, where the equivalent DFA would be exponentially large (Details in Appendix~\ref{app-sec:task}).

\section{Conclusion}
We propose a constrained decoding algorithm for diffusion language models under constraints expressible as finite automata. Viewing finite automata as graphical models, our approach performs exact and tractable sampling/inference over the constrained mean-field posterior at each denoising step, guaranteeing constraint satisfaction. We further propose a log-depth sampling strategy to unlock greater parallelization, yielding low overhead in practice. Our approach substantially improves model performance across a variety of downstream tasks where unconstrained models struggle.

\section{Limitations and Societal Impacts}
\label{sec:limitations}
Our method focuses on finite automata, and does not cover richer constraints such as context-free grammars. Constrained decoding can improve the reliability of LLM systems used for function calling and other structured-output applications. 
However, if misused, this method could enable the generation of syntactically valid but malicious tool calls or database queries, increasing the risk of unintended or harmful system actions.

\newpage
{
    \small
    \bibliographystyle{plainnat}
    \bibliography{main}
}

\appendix

\section{Forward-Backward Inference for Chain-Structured Graphical Models}
\label{app:forward-backward}

We give the standard forward--backward algorithm for the chain-structured
graphical model defined in Section~\ref{sec:back-nfa} (Figure~\ref{fig:fa_gm}).
We first present the \emph{edge-space} formulation, which mirrors the
edge-valued latents $z_t \in \mathcal{E}$ used in the main text, and then
aggregate edge messages by their endpoint state to obtain the equivalent
\emph{state-space} formulation, which has lower cost. This is a reformulation of the standard hidden Markov models backward message passing and sampling algorithm~\cite{rabiner1986introduction}.

\subsection{Edge Space}
We use the unnormalized factors of Section~\ref{sec:back-nfa}:
\begin{equation*}
\left\{
\begin{aligned}
g(x_i = v \mid z_i = e) &:= \pfa(x_i = v \mid z_i = e)\, \cdot p_\theta(x_i = v\given x^t), \\
g(z_1=e) &:= \pfa(z_1 = e),\\
g (z_i = e' \mid z_{i-1} = e)
&:= \pfa(z_i = e' \mid z_{i-1} = e).
\end{aligned}
\right.
\end{equation*}
We sample from $g$ in two passes.

\paragraph{Backward pass.} Define backward messages
$$\beta_t : \mathcal{E} \to \mathbb{R}_{\geq 0} \text{\quad for \quad} t = 1, \ldots, L,$$ where
$\beta_t(e)$ is the partition function of the suffix
$(z_{t+1:L}, x_{t+1:L})$ under $g$ given $z_t = e$ restricted to constraint-satisfying sequences. With the base case absorbing the accept condition
\begin{equation*}
    \beta_L(e) \;=\; \ind{\dst(e) \in \mathcal{F}},
\end{equation*}
the recurrence for $t = L{-}1, L{-}2, \ldots, 1$ is
\begin{equation*}
    \beta_t(e) \;=\; \sum_{e' \in \mathcal{E}}\,
                     g(z_{t+1}{=}e' \mid z_t{=}e)\,
                     \left( \sum_{v \in \mathcal{V}}\,
                            g(x_{t+1}{=}v \mid z_{t+1}{=}e') \right)
                     \beta_{t+1}(e').
\end{equation*}
The structural constraints live inside $g$: $g(z_{t+1} \mid z_t)$ vanishes
unless the edges chain end-to-end, and $g(x_{t+1} \mid z_{t+1})$ vanishes
outside the edge label set, so the sums above effectively range over chained
successors and label-compatible tokens.

\paragraph{Forward sampling pass.} We start by sampling $(z_1, x_1)$
jointly from
\begin{equation}
    \mathbb{P}\bigl(z_1 = e,\; x_1 = v\bigr) \;\propto\;
        g(z_1 = e)\,
        g(x_1 = v \mid z_1 = e)\,
        \beta_1(e).
\end{equation}
For each subsequent step $t = 2, \ldots, L$, conditional on the previously
sampled $z_{t-1}$, jointly sample $(z_t, x_t)$ with probability
\begin{equation}
    \mathbb{P}\bigl(z_t = e,\; x_t = v\bigr) \;\propto\;
        g(z_t = e \mid z_{t-1})\,
        g(x_t = v \mid z_t = e)\,
        \beta_t(e),
\end{equation}
with the convention $g(z_1 \mid z_0) := g(z_1)$ for the initial step. The
factor $\beta_t(e)$ reweights each candidate edge by the probability that the
suffix from $\dst(e)$ extends into an accepting state. By construction, the
resulting $x_{1:L}$ is the marginal of $g/Z$ where $Z$ is the partition restricted to accepting paths.

\paragraph{Time complexity.} 
The backward message-passing pass costs $\mathcal{O}(L(|\mathcal{E}|^2 + |\mathcal{E}||\mathcal{V}|))$ FLOPs: the $\mathcal{O}(L|\mathcal{E}|^2)$ term comes from edge-to-edge message propagation, while the $\mathcal{O}(L|\mathcal{E}||\mathcal{V}|)$ term arises from emission reweighting using LM probabilities. 
The forward pass for sampling $z_i$ and $x_i$ with $1\leq i \leq L$ adds $\mathcal{O}(L(|\mathcal{E}| + |\mathcal{V}|))$ FLOPs of work.

\subsection{State Space}
The same chain admits an equivalent presentation over states. Aggregating edges
by their endpoints yields state-valued latents $s_t \in \mathcal{S}$ for
$t = 0, 1, \ldots, L$ (with $s_0$ pinned to the FA's initial state), and the
joint factors of $g$ become
\begin{equation*}
\left\{
\begin{aligned}
g(s_0) &\;=\; \ind{s_0 = s_0^{\mathrm{init}}}, \\
g(s_t, x_t = v \mid s_{t-1}) &\;=\;
   \!\!\!\!\sum_{e :\, \src(e) = s_{t-1},\, \dst(e) = s_t}\!\!\!\!
       g(x_t = v \mid z_t = e),
\end{aligned}
\right.
\end{equation*}
together with the accept indicator $\ind{s_L \in \mathcal{F}}$. The joint
transition factor aggregates contributions from all edges that traverse the
same state pair, naturally handling NFAs where multiple edges may connect
$s_{t-1}$ to $s_t$. It is convenient to define the per-position transition
matrix
\begin{equation*}
M_t \in \mathbb{R}_{\geq 0}^{|\mathcal{S}| \times |\mathcal{S}|},
\qquad
M_t(s, s') \;:=\; \sum_{v \in \mathcal{V}}\,
                  g(s_t = s',\, x_t = v \mid s_{t-1} = s),
\end{equation*}
which marginalizes over the emitted token.

\paragraph{Backward pass.} Define
$\beta_t : \mathcal{S} \to \mathbb{R}_{\geq 0}$ for $t = 0, 1, \ldots, L$
(overloading the symbol; the argument type disambiguates state from edge).
With
\begin{equation*}
    \beta_L(s) \;=\; \ind{s \in \mathcal{F}},
    \qquad
    \beta_{t-1}(s) \;=\; \sum_{s' \in \mathcal{S}}\, M_t(s, s')\, \beta_t(s'),
\end{equation*}
this is the standard HMM backward recurrence with transition matrix $M_t$,
related to the edge-space messages by $\beta_t(s) = \beta_t(e)$ for any edge
$e$ with $\dst(e) = s$ (well-defined since edge-space $\beta_t(e)$ depends on
$e$ only through $\dst(e)$).

\paragraph{Forward sampling pass.} Initialize
$s_0 = s_0^{\mathrm{init}}$. For each $t = 1, 2, \ldots, L$, jointly sample
$(s_t, x_t)$ with probability
\begin{equation*}
    \mathbb{P}(s_t = s',\, x_t = v \mid s_{t-1}) \;\propto\;
        g(s_t = s',\, x_t = v \mid s_{t-1})\, \beta_t(s').
\end{equation*}

\paragraph{Time complexity.}
Each backward update is $\mathcal{O}(|\mathcal{S}|^2)$, forming $M_t$
over the vocabulary costs $\mathcal{O}(|\mathcal{E}|\,|\mathcal{V}|)$ per
position, and forward sampling at each position contributes
$\mathcal{O}(|\mathcal{S}| + |\mathcal{V}|)$ via the staged
$(s_t, x_t)$ draw described above. The total state-space cost is
$\mathcal{O}\bigl(L\,(|\mathcal{S}|^2 + |\mathcal{E}|\,|\mathcal{V}|)\bigr)$,
improving over the naive $\mathcal{O}(L\,|\mathcal{E}|^2)$ edge-space cost.

\section{Parallel Sampling Algorithm}
\label{app-sec:parallel-sample}
We show that Algorithm~\ref{alg:parallel-sample} produces samples distributed exactly as the constrained posterior $p(x^0 \mid x^t, \mathcal{C})$, despite using unnormalized conditionals during sampling.
\begin{proposition}\label{prop:exact-sampling}
Let $p(x^0 \mid x^t)$ denote the diffusion model's mean-field predictions, and let $\mathcal{C}$ be a constraint encoded as a finite automaton with state space $\mathcal{S}$. Algorithm~\ref{alg:parallel-sample} returns a sample $x^0$ distributed exactly as
\begin{equation*}
    p(x^0 \mid x^t, \mathcal{C}) \;\propto\; p(x^0 \mid x^t) \cdot \ind{x^0 \in \mathcal{L}(\mathcal{C})}.
\end{equation*}
\end{proposition}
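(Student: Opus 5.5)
The plan is to show that the tree sampler and the standard ancestral sampler of Appendix~\ref{app:forward-backward} produce the same joint law on $(z_1,\dots,z_{L+1},x_{1:L})$. I work in the state-space formulation with per-position matrices $M_t$ from Appendix~\ref{app:forward-backward}. I identify the boundary latents $z_\ell$ with the automaton state $s_{\ell-1}$ just before position $\ell$, so that $z_{L+1}=s_L$ is the final state and carries the accept indicator. The unnormalized joint is
\[
g(s_{0:L},x_{1:L})=\ind{s_0=s_0^{\mathrm{init}}}\prod_{t=1}^{L} g(s_t,x_t\mid s_{t-1})\,\ind{s_L\in\mathcal{F}}.
\]
\textbf{Step 1.} Marginalizing the latents gives $\sum_{s_{0:L}} g \propto p(x^0\mid x^t)\,\pfa(x^0)$. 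For a deterministic automaton, $\pfa(x^0)=\ind{x^0\in\mathcal{C}}$, which is exactly the target distribution. For an NFA the same statement holds with path-count weights, as noted in Section~\ref{sec:method-chain}. It therefore suffices to show that Algorithm~\ref{alg:parallel-sample} samples $(s_{0:L},x_{1:L})$ from $g/Z$.

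\textbf{Step 2 (bottom-up quantities).} For $\ell<r$ let $T_{\ell,r}=M_\ell M_{\ell+1}\cdots M_{r-1}$, so that $T_{\ell,r}(a,b)$ is the total $g$-weight of all partial paths from $z_\ell=a$ to $z_r=b$ together with their emissions. The bottom-up loop computes exactly these products for all dyadic intervals, using associativity $T_{\ell,r}=T_{\ell,m}T_{m,r}$. These are the unnormalized ``multi-step transitions'' $p(z_r\mid z_\ell)$ in Equation~(\ref{eq:midpoint-conditional}). Only ratios of these weights enter the sampler, so the missing normalizations cancel. Handling this carefully is the point the proposition emphasizes.

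\textbf{Step 3 (induction).} The claim is that for every interval $[\ell,r)$ and boundary values $(a,b)$ with $T_{\ell,r}(a,b)>0$, $\textsc{Sample}(\ell,r,a,b)$ returns $(z_{\ell+1:r-1},x_{\ell:r-1})$ with probability
\[
\frac{\prod_{t=\ell}^{r-1} g(z_{t+1},x_t\mid z_t)}{T_{\ell,r}(a,b)}.
\]
This is the conditional of $g$ given the two boundaries, which is well defined by the Markov property illustrated in Figure~\ref{fig:hmm-cond-indep}.

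\emph{Base case $r=\ell+1$.} The sampler draws $x_\ell\propto g(b,x_\ell\mid a)$, and the normalizer is $M_\ell(a,b)=T_{\ell,\ell+1}(a,b)$.

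\emph{Inductive step.} The midpoint is drawn with probability $T_{\ell,m}(a,c)\,T_{m,r}(c,b)/T_{\ell,r}(a,b)$. The two halves are then drawn independently, and by the induction hypothesis they have densities with denominators $T_{\ell,m}(a,c)$ and $T_{m,r}(c,b)$. Multiplying the three factors, these denominators cancel. What remains is the product of all local factors over $[\ell,r)$ divided by $T_{\ell,r}(a,b)$, which is Equation~(\ref{eq:tree-recursion}) made precise.

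\textbf{Step 4 (top level).} The pair $(z_1,z_{L+1})$ is drawn with probability proportional to $\ind{z_1=s_0^{\mathrm{init}}}\,T_{1,L+1}(z_1,z_{L+1})\,\ind{z_{L+1}\in\mathcal{F}}$. Combining this with Step 3 at $[1,L+1)$, the factor $T_{1,L+1}$ cancels, and the full output has probability $g/Z$. Marginalizing the latents via Step 1 then gives the claim.

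The main obstacle is bookkeeping. First, the unnormalized factors must be handled so that each division is by a positive quantity: the sampler only ever conditions on boundary pairs with positive weight, because each pair was itself drawn with probability proportional to that weight. Second, the indexing of $z$ as boundary states must be reconciled with the edge-valued latents of the main text. This is why I work in the state-space formulation, where the collapse of edge messages to states is already given in Appendix~\ref{app:forward-backward}. Non-power-of-two $L$ only changes the choice of $m$ and not the induction.
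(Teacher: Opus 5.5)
Your proposal is correct and follows the same route as the paper's proof: induction over the recursion with an exact categorical base case, then an inductive step factoring the segment law into the midpoint conditional times two conditionally independent halves, then a top-level composition through the joint law of $(z_1, z_{L+1})$. Your transfer-matrix bookkeeping (explicit $T_{\ell,r}$, cancelling denominators, positivity of every conditioned boundary pair) makes precise what the paper attributes to $d$-separation, and you rightly note that the identity with $\ind{x^0\in\mathcal{L}(\mathcal{C})}$ is exact only for deterministic (more generally, unambiguous) automata; the paper's proof leaves this restriction implicit, and you should state it outright rather than calling the NFA case ``the same statement''.
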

\begin{proof}
We proceed by induction on the recursion depth.
\paragraph{Base case.} At a leaf of the recursion (segment with boundary states $\zl, \zr$ satisfying $r - \ell \leq 1$), the algorithm samples a single token $x_\ell$ from the conditional $p(x_\ell \mid \zl, \zr)$. Since $x_\ell$ ranges over a finite vocabulary, this categorical distribution can be normalized exactly by summing over its support, and the resulting sample is exact.
\paragraph{Inductive step.} Consider a segment $[\ell, r]$ with boundary states $\zl, \zr$ and midpoint $m$. The algorithm performs three steps:
\begin{enumerate}
    \item Sample $\zm$ from $p(\zm \mid \zl, \zr) \propto p(\zm \mid \zl) \cdot p(\zr \mid \zm)$.
    \item Recursively sample $x_{\ell:m-1}$ from the left subproblem with endpoints $(\zl, \zm)$.
    \item Recursively sample $x_{m:r-1}$ from the right subproblem with endpoints $(\zm, \zr)$.
\end{enumerate}
Step 1 is a categorical sampling problem over the finite state space $\mathcal{S}$, and the unnormalized scores can be summed and normalized exactly in $\mathcal{O}(|\mathcal{S}|)$. By the inductive hypothesis, steps 2 and 3 produce exact samples from their respective conditional distributions.
By $d$-separation in the hidden Markov model induced by $p(x^0 \mid x^t, \mathcal{C})$ (Figure~\ref{fig:fa_gm}), conditioning on $\zl, \zm, \zr$ renders $x_{\ell:m-1}$ and $x_{m:r-1}$ independent. Hence the joint distribution over the segment factors as
\begin{equation*}
    p(x_{\ell:r-1}, \zm \mid \zl, \zr) \;=\; p(\zm \mid \zl, \zr) \cdot p(x_{\ell:m-1} \mid \zl, \zm) \cdot p(x_{m:r-1} \mid \zm, \zr),
\end{equation*}
which is precisely the distribution sampled by steps 1--3. Thus the segment-level sample is exact.
\paragraph{Composition.} The boundary states $z_1, z_{L+1}$ at the root of the recursion are sampled from their joint marginal $p(z_1, z_{L+1})$, which is computed exactly from the precomputed skip-transitions. Combined with the inductive argument above, the full sample $x^0$ is distributed exactly as $p(x^0 \mid x^t, \mathcal{C})$.
\paragraph{Remark on unnormalized conditionals.} Although each conditional $p(\zm \mid \zl, \zr)$ is written up to a $(\zl, \zr)$-dependent normalization constant, this poses no issue for exactness: at sampling time, the constant is computed by summing the $|\mathcal{S}|$ unnormalized scores, and the resulting categorical distribution is exact. Different segments use different normalization constants, as they should---each subproblem is conditioned on different endpoint values, and the chain-rule decomposition above ensures these segment-local normalizers compose into the correct joint distribution.
\end{proof}
\section{Experiment Details}
\label{app-sec:task}

\subsection{Generation Configurations} 
Table~\ref{tab:gen-settings} summarizes the generation hyperparameters used for each task. We follow the default decoding configurations released with each model. Generation length and number of denoising steps are matched across models within each task to ensure a fair comparison.
\begin{table}[h]
\centering
\small
\renewcommand{\arraystretch}{1.2}
\caption{Generation settings per task across the evaluated diffusion language models. All runs use temperatures $T=0$ and $T=1$. The {\textbf{Remasking}} column reports the algorithm for both unconstrained and constrained decoding; for constrained runs, confidence values are recomputed from the constrained distribution.}
\begin{tabular}{
@{}
ll
>{\columncolor{gray!10}}c
>{\columncolor{gray!10}}c
>{\columncolor{gray!10}}c
l
@{}}
\toprule
\textbf{Task} & \textbf{Model} & \textbf{Seq Length} & \textbf{Steps} & \textbf{Block Size} & \textbf{Remasking} \\
\midrule

\multirow{2}{*}{BFCL \emph{(Python / JSON)}}
  & \llada-8B-Instruct       & 256 & 128 & 32  & { low\_confidence} \\
  & \dream-v0-Instruct-7B      & 256 & 128 & --- & { entropy } \\
\midrule

\multirow{2}{*}{xLAM \emph{(Python / JSON)}}
  & \llada-8B-Instruct       & 128 & 64  & 32  & { low\_confidence} \\
  & \dream-v0-Instruct-7B      & 128 & 64  & --- & { entropy} \\
\midrule

\multirow{2}{*}{Sudoku 4$\times$4}
  & \llada-8B-Base          & 32  & 32  & 32  & { low\_confidence} \\
  & \dream-v0-Base-7B    & 32  & 32  & --- & { entropy} \\
\midrule

\multirow{2}{*}{Countdown \emph{(cd3)}}
  & \llada-8B-Base          & 32  & 32  & 32  & { low\_confidence} \\
  & \dream-v0-Base-7B    & 32  & 32  & --- & { entropy} \\
\midrule

\multirow{2}{*}{GSM-Symbolic}
  & \llada-8B-Instruct       & 128 & 64  & 32  & { low\_confidence} \\
  & \dream-v0-Instruct-7B      & 128 & 64  & --- & { entropy} \\
\midrule

\multirow{2}{*}{Spider}
  & \llada-8B-Instruct       & 128 & 64  & 32  & { low\_confidence} \\
  & \dream-v0-Instruct-7B      & 128 & 64  & --- & { entropy } \\

\bottomrule
\end{tabular}

\label{tab:gen-settings}
\end{table}

\subsection{Prompt Templates} 
We provide representative prompts and expected outputs for each evaluation task in Table~\ref{tab:prompt_examples}.
{\footnotesize
\setlength{\tabcolsep}{6pt}
\renewcommand{\arraystretch}{1.2}

\begin{longtable}{
@{}
>{\raggedright\arraybackslash}p{2cm}
>{\columncolor{gray!10}\raggedright\arraybackslash}p{8.5cm}
>{\raggedright\arraybackslash}p{2.6cm}
@{}}
\caption{Representative prompts and expected outputs for each evaluation task. Repeated system prompts and in-context examples are omitted for space.}
\label{tab:prompt_examples}\\
\toprule
\textbf{Task} & \textbf{Example Prompt} & \textbf{Example Answer} \\
\midrule
\endfirsthead

\multicolumn{3}{l}{\footnotesize\itshape Table \ref{tab:prompt_examples} continued from previous page}\\
\toprule
\textbf{Task} & \textbf{Example Prompt} & \textbf{Example Answer} \\
\midrule
\endhead

\midrule
\multicolumn{3}{r}{\footnotesize\itshape continued on next page}\\
\endfoot

\bottomrule
\endlastfoot

{\textbf{Function Calling  (BFCL/ xLAM)} \emph{Python Format}} 
&
\emph{System}: You are an expert in composing functions. You are given a question and a set of possible functions. Based on the question, you will need to make one or more function/tool calls to achieve the purpose.\par\vspace{4pt}
If none of the functions can be used, point it out. If the given question lacks the parameters required by the function, also point it out. You should only return the function calls in your response.\par\vspace{4pt}
If you decide to invoke any of the function(s), you MUST put it in the format of:\par\vspace{4pt}
{\ttfamily
\hspace*{1em}[func\_name1(params\_name1=params\_value1,\newline
\hspace*{2em}params\_name2=params\_value2, ...),\newline
\hspace*{1.5em}func\_name2(params)]\par}
\vspace{4pt}
You SHOULD NOT include any other text in the response.\par\vspace{4pt}
At each turn, you should try your best to complete the tasks requested by the user within the current turn. Continue to output functions to call until you have fulfilled the user's request to the best of your ability. Once you have no more functions to call, the system will consider the current turn complete and proceed to the next turn or task.\par\vspace{4pt}
Here is a list of functions in JSON format that you can invoke:\par\vspace{4pt}
{\ttfamily
\hspace*{1em}\{\newline
\hspace*{2em}"name": "calculate\_triangle\_area",\newline
\hspace*{2em}"description": "Calculate the area\newline
\hspace*{3em}of a triangle.",\newline
\hspace*{2em}"parameters": \{\newline
\hspace*{3em}"base": integer,\newline
\hspace*{3em}"height": integer\newline
\hspace*{2em}\}\newline
\hspace*{1em}\}\par}
\vspace{4pt}
\textit{User:} 
Find the area of a triangle with a base of 10 units and height of 5 units.
&
{\ttfamily [calculate\newline\_triangle\_area\newline(base=10, height=5)]}
\\
\midrule
\textbf{Function Calling (BFCL/ xLAM)} \emph{JSON Format} &
\emph{System:} \textit{(...same as above...)}\par\vspace{4pt}
If you decide to invoke any of the function(s), you MUST put it in the format of a JSON array:\par\vspace{4pt}
{\ttfamily
\hspace*{1em}[\{"name": "func\_name1",\newline
\hspace*{2em}"arguments": \{"params\_name1":\newline
\hspace*{3em}params\_value1, ...\}\}, ...]\par}
\vspace{4pt}
\textit{(...same as above...)}
\textit{User:} 
Find the area of a triangle with a base of 10 units and height of 5 units.
&
{\ttfamily [\{"name":\newline "calculate\newline\_triangle\_area",\newline "arguments": \{"base": 10,\newline "height": 5\}\}]}
\\
\midrule
\textbf{Sudoku} &
Fill the positions where the values are 0 in a 4$\times$4 grid with digits 1--4 so that each column, each row, and each of the four 2$\times$2 subgrids that compose the grid contains all of the digits from 1 to 4.\par\vspace{4pt}
\textit{(In-context examples are omitted)}\par\vspace{4pt}
{Input:}\par\vspace{2pt}
{\ttfamily 1003\newline 0300\newline 3210\newline 4132}\par\vspace{4pt}
{Output:}
&
{\ttfamily 1423\newline 2341\newline 3214\newline 4132}
\\
\midrule
\textbf{Countdown} &
Given 4 numbers, use $+,-,\times,\div$ to operate over the first three numbers to achieve the last one.\par\vspace{4pt}
\textit{(In-context examples are omitted)}\par\vspace{4pt}
\textit{Input:} {\ttfamily 61, 83, 8, 30}\par\vspace{4pt}
\textit{Output:}
&
{\ttfamily 83-61=22,\newline 8+22=30}
\\

\midrule
\textbf{Math Reasoning (GSM-Symbolic)} &
\emph{System:} You are an expert in solving grade school math tasks. You will be presented with a grade-school math word problem with symbolic variables and be asked to solve it.\par\vspace{4pt}
Before answering you should reason about the problem (using the {\ttfamily <reasoning>} field in the response described below). Intermediate symbolic expressions generated during reasoning should be wrapped in {\ttfamily <<\,>>}.\par\vspace{4pt}
Only output the symbolic expression wrapped in {\ttfamily <<\,>>} that answers the question. The expression must use numbers as well as the variables defined in the question. You are only allowed to use the following operations: {\ttfamily +, -, /, //, \%, *, **}.\par\vspace{4pt}
You will always respond in the format described below:\par\vspace{4pt}
{\ttfamily Let's think step by step. <reasoning>\newline The final answer is <<symbolic expression>>}\par\vspace{4pt}
\textit{(4 in-context examples are omitted)}\par\vspace{4pt}
\textit{User:} Rania saw a 210-foot whale with 7 72-inch remoras attached to it. What percentage of the whale's body length is the combined length of the remoras?
&
Let's think step by step. The combined length of the remoras in inches is\newline {\ttfamily <<7 * 72>>}. \newline Convert to feet:\newline {\ttfamily <<7 * 72 / 12>>}. Percentage:\newline {\ttfamily <<7 * 72 / 12 / 210 * 100>>}.\newline The final answer is\newline {\ttfamily <<7 * 72 / 12 / 210 * 100>>}.
\\

\midrule
\textbf{Text-to-SQL (Spider)} &
\emph{System:} You are an expert in SQL. Given a database schema and a question, write a correct SQL query.\par\vspace{4pt}
\textit{User:} Here is the database schema:\par\vspace{4pt}
{\ttfamily
CREATE TABLE stadium (\newline
\hspace*{1em}stadium\_id INT PRIMARY KEY,\newline
\hspace*{1em}location TEXT, name TEXT,\newline
\hspace*{1em}capacity INT, highest INT,\newline
\hspace*{1em}lowest INT, average INT\newline
);\par}
\vspace{4pt}
\textit{(...more database omitted...)}\par\vspace{4pt}
Question: How many singers do we have?\par\vspace{4pt}
&
{\ttfamily SELECT count(*)\newline FROM singer}
\\

\end{longtable}

}

\newpage
\subsection{Finite Automata Statistics}
Table~\ref{tab:automaton-sizes}
reports the per-benchmark distribution of automaton sizes for the
\dream-7B and \llada-8B tokenizers. For each benchmark we list
the number of automata $N$ used during evaluation and, across those
$N$ automata, the median, 95th percentile, and maximum of the node and
edge counts. Reporting the median (rather than the mean) reflects the
size of a typical instance, and the 95th percentile and maximum quantify
the heavy right tail driven by a small number of large schemas.
\begin{table}[h]
\centering
\caption{Automaton sizes across benchmarks (\dream-7B and \llada-8B tokenizers). All tasks use DFAs except Spider (NFA due to exponential blowup on DFA). We report number of automata $N$, state counts (tokenizer-independent), and edge counts for each tokenizer (median / P95 / max). GSM-Symbolic uses a single automaton.}
\label{tab:automaton-sizes}
\small
\setlength{\tabcolsep}{4pt}
\renewcommand{\arraystretch}{1.2}
\begin{tabular}{l r r r r | r r r | r r r}
\toprule
& & \multicolumn{3}{c}{\textbf{\# States}} 
& \multicolumn{3}{c}{\textbf{Edges (\dream)}} 
& \multicolumn{3}{c}{\textbf{Edges (\llada)}} \\
\cmidrule(lr){3-5} \cmidrule(lr){6-8} \cmidrule(lr){9-11}
\textbf{Task} & $N$ & Med & P95 & Max 
& Med & P95 & Max 
& Med & P95 & Max \\
\midrule
xLAM \emph{JSON}          & 1000 & 154 & 387 & 1{,}400 & 542 & 1{,}539 & 7{,}414 & 507 & 1{,}415 & 6{,}550 \\
xLAM \emph{Python}        & 1000 & 102 & 287 & 1{,}227 & 397 & 1{,}280 & 6{,}986 & 374 & 1{,}222 & 6{,}154 \\
BFCL NL \emph{JSON}       & 1000 & 122 & 357 & 818     & 409 & 1{,}213 & 3{,}968 & 386 & 1{,}134 & 3{,}572 \\
BFCL L \emph{JSON}        & 1351 & 385 & 997 & 2{,}459 & 1{,}317 & 3{,}932 & 9{,}901 & 1{,}233 & 3{,}672 & 9{,}128 \\
BFCL NL \emph{Python}     & 1000 &  90 & 272 & 713     & 319 &   975 & 3{,}738 & 304 &   936 & 3{,}342 \\
BFCL L \emph{Python}      & 1351 & 301 & 778 & 1{,}613 & 1{,}112 & 2{,}839 & 7{,}576 & 1{,}042 & 2{,}694 & 6{,}704 \\
Sudoku                    &  900 &  21 &  21 &   21    &  21 &    21 &   21    &  21 &    21 &   21 \\
Countdown                 &  993 &  47 &  55 &   77    &  53 &    63 &   88    &  52 &    62 &   87 \\
GSM-Symbolic              &    1 &  56 &  -- &   56    & 745 &    -- &  745    & 748 &    -- &  748 \\
Spider                    &   20 & 8{,}796 & 15{,}573 & 19{,}509 
& 85{,}415 & 168{,}920 & 171{,}580 
& 70{,}477 & 142{,}469 & 144{,}591 \\
\bottomrule
\end{tabular}
\end{table}

\newpage
\section{Qualitative Examples on BFCL}
Table~\ref{tab:qual_violation} shows four representative cases on \texttt{live\_simple}: in each one the model has the right intent but cannot keep its output inside the tool schema. Constraining decoding to the schema FA eliminates these failures by construction, recovering all four examples.
\newcommand{\baseout}[1]{\textcolor{red!75!black}{\texttt{#1}}}
\newcommand{\oursout}[1]{\textcolor{green!50!black}{\texttt{#1}}}

\begin{table}[h]
\caption{Handling constraint violations on BFCL-Live (\dream-Instruct, $T{=}0$).
The baseline often produces outputs that violate the tool specification, and receives $0\%$ accuracy. E.g., prose instead of a tool call, incorrect casing, or hallucinated function names. Our constrained generation guarantees syntactically valid tool calls by construction (constraint satisfaction rate $=100\%$) and is also semantically correct on these examples.
}
\centering
\small
\setlength{\tabcolsep}{4pt}
\renewcommand{\arraystretch}{1.15}
\begin{tabular}{@{}>{\bfseries}l p{0.84\linewidth}@{}}
\toprule
\multicolumn{2}{@{}l}{\textit{(a) JSON: baseline emits prose instead of a tool call.}} \\
\midrule
User & \emph{``I have a customer query asking for `Need ATM location'. Which category should it be classified under?''} \\
Tool & \texttt{record(atm\_finder\_start: list[str], \ldots)} -- $6$ enum-keyed list args. \\
Base & \baseout{The customer query 'Need ATM location' should fall under the `atm\_finder\_start' category.} \quad \textcolor{red!75!black}{\scriptsize($\hookrightarrow$ Invalid format: not a JSON tool-call list.)} \\
Ours & \oursout{[\{"name": "record", "arguments": \{"atm\_finder\_start": ["Need ATM location"]\}\}]}  \quad \textbf{(correct)} \\
\midrule
\multicolumn{2}{@{}l}{\textit{(b) JSON: wrong argument-name casing breaks the schema.}} \\
\midrule
User & \emph{``Retrieve today's alerts for sensor Q3CC-CRT3-SZ2G, max $10$ alerts per page.''} \\
Tool & \texttt{get\_sensor\_alerts(perPage: int*, sensorSerial: str*, \ldots)} \\
Base & \baseout{[\{"name": "get\_sensor\_alerts", "arguments": \{\textbf{"PerPage"}: 10, "t0": "2023-10-01", "sensorSerial": "Q3CC-CRT3-SZ2G"\}\}]} \quad \textcolor{red!75!black}{\scriptsize($\hookrightarrow$ missing required `perPage'; \texttt{t0} hallucinated.)} \\
Ours & \oursout{[\{"name": "get\_sensor\_alerts", "arguments": \{"perPage": 10, "sensorSerial": "Q3CC-CRT3-SZ2G"\}\}]} \quad \textbf{(correct)} \\
\midrule
\multicolumn{2}{@{}l}{\textit{(c) Python: hallucinated function name (dotted-method that doesn't exist in the tool list).}} \\
\midrule
User & \emph{``I've completed the task `Submit monthly financial report'. Mark it as completed.''} \\
Tool & \texttt{todo(type: enum["complete", \ldots], content: str)} \\
Base & \baseout{[todo.complete(content="Submit monthly financial report")]} \quad \textcolor{red!75!black}{\scriptsize($\hookrightarrow$ function name `todo.complete' not in tool list.)} \\
Ours & \oursout{[todo(type="complete", content="Submit monthly financial report")]} \quad \textbf{(correct)} \\
\midrule
\multicolumn{2}{@{}l}{\textit{(d) Python: typo in argument name.}} \\
\midrule
User & \emph{``Find me a Comic movie directed by Nitesh Tiwari.''} \\
Tool & \texttt{Movies\_3\_FindMovies(directed\_by, genre, cast)} \\
Base & \baseout{[Movies\_3\_FindMovies(\textbf{ddirected\_by}="Nitesh Tiwari", genre="Comedy")]} \quad \textcolor{red!75!black}{\scriptsize($\hookrightarrow$ unexpected parameter `ddirected\_by'.)} \\
Ours & \oursout{[Movies\_3\_FindMovies(directed\_by="Nitesh Tiwari", genre="Comedy")]} \quad \textbf{(correct)} \\
\bottomrule
\end{tabular}

\label{tab:qual_violation}
\end{table}

\newpage
Constraining the output to the schema is not the same as constraining it to the gold answer. Table~\ref{tab:qual_residual} shows three failures that survive our decoder: the schema accepts every output, but the value the model chose disagrees with the reference. These are the errors a token-level grammar cannot fix.
\begin{table}[h]

\centering
\caption{Failure modes under constrained decoding on BFCL-Live (\dream-Instruct, $T{=}0$).
All outputs satisfy the schema FA (valid format, function name, and argument types), but are still rejected by the official scorer because some argument values differ from the ground truth. These errors cannot be ruled out by token-level constraints: the model must choose among multiple schema-valid options and selects the wrong one.}
\small
\setlength{\tabcolsep}{4pt}
\renewcommand{\arraystretch}{1.15}
\begin{tabular}{@{}>{\bfseries}l p{0.84\linewidth}@{}}
\toprule
\multicolumn{2}{@{}l}{\textit{(a) Over-specified \emph{location} -- model copies the user's parenthetical landmark.}} \\
\midrule
User & \emph{``Tell me the current weather in Yosemite National Park, which locates at Mariposa, CA, in Celsius.''} \\
Tool & \texttt{get\_current\_weather(location: str, unit: str)} \\
Ours & \oursout{[\{"name": "get\_current\_weather", "arguments": \{"location": "Yosemite National Park, Mariposa, CA", "unit": "celsius"\}\}]} \\
Gold & \texttt{location = "Mariposa, CA"} \quad \textcolor{gray}{\scriptsize($\hookrightarrow$ schema OK; string-mismatch for \texttt{location}.)} \\
\midrule
\multicolumn{2}{@{}l}{\textit{(b) Hallucinated argument value -- enum-legal but not the gold default.}} \\
\midrule
User & \emph{``Set the air conditioner to cool mode.''} (no wind speed, no temperature given) \\
Tool & \texttt{ThinQ\_Connect(body: dict)} with \texttt{body.windStrength} $\in$ \{LOW, MID, HIGH\}. \\
Ours & \oursout{[ThinQ\_Connect(body=\{\ldots,\,"windStrength": "HIGH",\,"coolTargetTemperature": 24,\,\ldots\})]} \\
Gold & \texttt{windStrength = "MID"}, no \texttt{targetTemperature}. \quad \textcolor{gray}{\scriptsize($\hookrightarrow$ FA accepts every enum value; LM picked the wrong one.)} \\
\midrule
\multicolumn{2}{@{}l}{\textit{(c) Missing optional-but-inferable argument.}} \\
\midrule
User & \emph{``I made a \$59.99 purchase in San Francisco, California. How much sales tax?''} \\
Tool & \texttt{calculate\_tax(purchase\_amount, state, county, city, \ldots)} \\
Ours & \oursout{[\{"name": "calculate\_tax", "arguments": \{"purchase\_amount": 59.99, "state": "CA"\}\}]} \\
Gold & \texttt{state="CA", county="San Francisco", city="San Francisco"} \quad \textcolor{gray}{\scriptsize($\hookrightarrow$ \texttt{county} and \texttt{city} are schema-optional; LM omitted them.)} \\
\bottomrule
\end{tabular}

\label{tab:qual_residual}
\end{table}

\section{Additional Experiment Results}
\textbf{Scaling with the difficulty of problems.} 
In Sudoku, we vary the number of prefilled cells from 12 to 4, with fewer prefilled cells corresponding to more difficult puzzles. As shown in Figure~\ref{fig:sudoku-curves}, baseline performance drops sharply as the puzzles become harder. A common failure mode is that the model modifies prefilled cells, violating the puzzle constraints. Enforcing that these cells remain fixed effectively reduces the task to a text-infilling problem, leading to substantial improvements. Confidence-based scheduling further boosts performance by guiding decoding toward more likely valid solutions, particularly on the hardest puzzles where the base model struggles.
\begin{figure}[h]
  \centering
  \begin{minipage}{0.51\linewidth}
    \centering
    \includegraphics[width=0.524\linewidth]{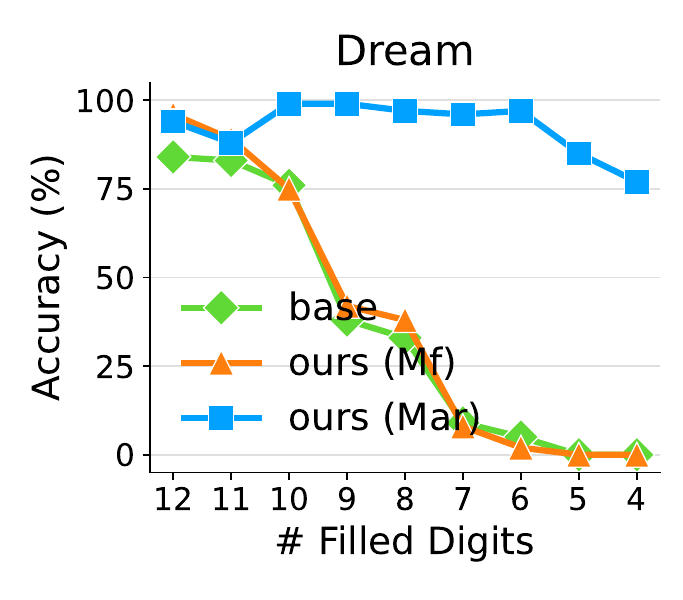}\hfill
    \includegraphics[width=0.472\linewidth]{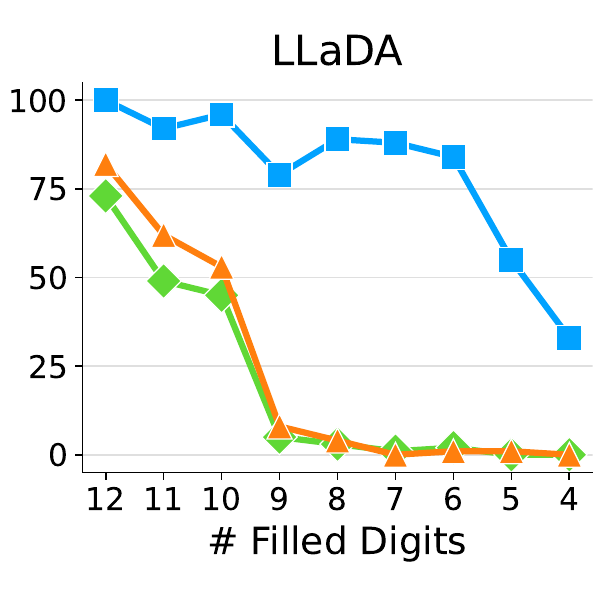}
  \end{minipage}
  \caption{Sudoku 4$\times$4 accuracy versus the number of prefilled digits (fewer digits indicate harder puzzles). {Mf} uses mean-field confidence from the base model, while {Mar} uses marginals under the constrained distribution.}
  \label{fig:sudoku-curves}
\end{figure}

\end{document}